% This must be in the first 5 lines to tell arXiv to use pdfLaTeX, which is strongly recommended.
\pdfoutput=1
% In particular, the hyperref package requires pdfLaTeX in order to break URLs across lines.

\documentclass[11pt]{article}

% Change "review" to "final" to generate the final (sometimes called camera-ready) version.
% Change to "preprint" to generate a non-anonymous version with page numbers.
\usepackage[final]{acl}

% Standard package includes
\usepackage{times}
\usepackage{latexsym}

% newly defined
\usepackage{colortbl}
\usepackage{arydshln}
\usepackage{multirow}
\usepackage{booktabs}

% For theorems and such
\usepackage{amsmath}
\usepackage{amssymb}
\usepackage{mathtools}
\usepackage{amsthm}
\usepackage{thmtools}

%%%%% NEW MATH DEFINITIONS %%%%%

\usepackage{amsmath,amsfonts,amsthm,bm,physics}
% Custimize

% Random variables

% rm is already a command, just don't name any random variables m

% Random vectors

% Elements of random vectors

% Random matrices

% Elements of random matrices

% Vectors

\def\vb{{\bm{b}}}

\def\ve{{\bm{e}}}

\def\vs{{\bm{s}}}

\def\vx{{\bm{x}}}
\def\vy{{\bm{y}}}
\def\vz{{\bm{z}}}

% Elements of vectors

% Matrix
\def\mA{{\bm{A}}}

\def\mE{{\bm{E}}}

\def\mP{{\bm{P}}}

\def\mZ{{\bm{Z}}}

% Tensor
\DeclareMathAlphabet{\mathsfit}{\encodingdefault}{\sfdefault}{m}{sl}
\SetMathAlphabet{\mathsfit}{bold}{\encodingdefault}{\sfdefault}{bx}{n}

% Graph

\def\gL{{\mathcal{L}}}

% Sets

% Don't use a set called E, because this would be the same as our symbol
% for expectation.

\def\sN{{\mathbb{N}}}

\def\sR{{\mathbb{R}}}

% Entries of a matrix

% entries of a tensor
% Same font as tensor, without \bm wrapper

% The true underlying data generating distribution

% The empirical distribution defined by the training set

% The model distribution

% Stochastic autoencoder distributions

 % Laplace distribution

% Wolfram Mathworld says $L^2$ is for function spaces and $\ell^2$ is for vectors
% But then they seem to use $L^2$ for vectors throughout the site, and so does
% wikipedia.

 % See usage in notation.tex. Chosen to match Daphne's book.

\DeclareMathOperator{\diag}{diag}

\theoremstyle{plain}
\newtheorem{theorem}{Theorem}

\newtheorem{lemma}[theorem]{Lemma}

\theoremstyle{definition}

\newtheorem{assumption}[theorem]{Assumption}

\theoremstyle{remark}

% For proper rendering and hyphenation of words containing Latin characters (including in bib files)
\usepackage[T1]{fontenc}
% For Vietnamese characters
% \usepackage[T5]{fontenc}
% See https://www.latex-project.org/help/documentation/encguide.pdf for other character sets

% This assumes your files are encoded as UTF8
\usepackage[utf8]{inputenc}

% This is not strictly necessary, and may be commented out,
% but it will improve the layout of the manuscript,
% and will typically save some space.
\usepackage{microtype}

% This is also not strictly necessary, and may be commented out.
% However, it will improve the aesthetics of text in
% the typewriter font.
\usepackage{inconsolata}

%Including images in your LaTeX document requires adding
%additional package(s)
\usepackage{graphicx}

% If the title and author information does not fit in the area allocated, uncomment the following
%
%\setlength\titlebox{<dim>}
%
% and set <dim> to something 5cm or larger.

\title{Ultra-Low-Dimensional Prompt Tuning via Random Projection}

\author{\scalebox{0.97}{Zijun Wu, Yongchang Hao, Lili Mou}$^{*}$ \\
  \scalebox{0.97}{Dept. Computing Science \& Alberta Machine Intelligence Institute (Amii), University of Alberta}\\
  $^{*}$\scalebox{0.97}{Canada CIFAR AI Chair}\\
  \scalebox{0.97}{\texttt{\{zijun4,yongcha1\}@ualberta.ca}, \texttt{doublepower.mou@gmail.com}}}

\begin{document}
\maketitle
\begin{abstract}
Large language models achieve state-of-the-art performance but are increasingly costly to fine-tune. Prompt tuning is a parameter-efficient fine-tuning method that addresses parameter-efficiency by learning prompt embeddings, but these embeddings are typically tied to the model’s hidden dimensionality, limiting parameter saving. In this paper, we propose Ultra-Low-dimensional Prompt Tuning (ULPT), a simple yet effective method that optimizes prompts in a low-dimensional space (e.g., 2D) and uses a frozen random matrix for up-projection. ULPT can achieve 98\% reduction in the training parameters compared to vanilla prompt tuning while preserving performance. Our extensive experiments across over 20 NLP tasks demonstrate that ULPT consistently outperforms recent parameter-efficient tuning methods using significantly fewer parameters, making it well-suited as a storage-efficient framework for massive LLM customization.\footnote{Our code is available at \url{https://github.com/MANGA-UOFA/ULPT}}
\end{abstract}

\section{Introduction}
Fine-tuning large language models (LLMs) is essential for adapting them to specific tasks and controlling their outputs~\citep{raffel2020exploring, wei2022finetuned}. However, the enormous size of LLMs makes full fine-tuning prohibitively resource intensive, as it involves updating millions or even billions of parameters. To address this challenge, parameter-efficient fine-tuning methods have emerged as practical solutions, such as low-rank adaptation~(LoRA; \citeauthor{hu2022lora}, \citeyear{hu2022lora}) and prompt tuning~\citep{lester-etal-2021-power, li-liang-2021-prefix}. These methods drastically reduce the number of tunable parameters, offering an efficient alternative while achieving performance comparable to full fine-tuning.

Prompt tuning introduces learnable prompt embeddings exclusively in the input layer of the model~\citep{lester-etal-2021-power, LIU2024208}, automating prompt engineering by gradient descent to guide the frozen LLM in producing task-specific outputs~\citep{petrov2024when, petrov2024prompting}. By contrast, LoRA modifies the model by injecting low-rank weight matrices into its layers, causing the number of trainable parameters to scale with model's depth~\citep{hu2022lora}. Given that LLMs have encoded substantial knowledge during pretraining~\citep{NEURIPS2020_1457c0d6, NEURIPS2022_8bb0d291} and that both in-context learning and expertly crafted prompts can achieve remarkable results~\citep{NEURIPS2022_9d560961, dong-etal-2024-survey}, prompt tuning offers a more efficient and effective alternative to LoRA in many scenarios~\citep{shi2024dept}. 

Despite these advantages, a key limitation remains: prompt embeddings are typically constrained to match the model’s hidden dimensionality~\citep{lester-etal-2021-power, li-liang-2021-prefix, liu-etal-2022-p, choi-etal-2023-smop, razdaibiedina-etal-2023-residual}. As the size of the model increases, the dimensionality of the prompt embedding space also increases~\cite{raffel2020exploring, touvron2023llama}. This scaling leads to unnecessary complexity, as full dimensionality is often not required for task adaptation~\citep{aghajanyan-etal-2021-intrinsic, qin2022exploringuniversalintrinsictask}. Consequently, optimizing in this expanded space becomes inefficient in parameter's usage and may also increase the risk of overfitting, especially for less complex tasks or with limited training data.

In this paper, we propose \textbf{U}ltra-\textbf{L}ow-Dimensional \textbf{P}rompt \textbf{T}uning (ULPT), a novel approach that decouples the prompt and model dimensions, which enables to learn prompt embeddings in an ultra-low-dimensional space. A na\"ive attempt is to jointly optimize the ultra-low-dimensional embeddings with an up-projection matrix~\citep{xiao-etal-2023-decomposed, guo2024loptlowrankprompttuning}, but the learnable up-projection matrix may result in more trainable parameters than vanilla prompt tuning. 
We avoid this overhead by employing a \textit{random} but \textit{frozen} matrix for the up-projection, as shown in Figure~\ref{fig: ulpt diagram}a. We further introduce lightweight, learnable \textit{shift} and \textit{scale} embedding vectors to better align the randomly projected embeddings with the model’s prompt space~\citep{wu2024zeroshot}. 

We provide a convergence analysis for ULPT, and further show that a low-dimensional space with random projection can effectively approximate high-rank information that preserve the relational structure of embeddings, which is crucial for attention mechanisms in LLMs that depend on pairwise dot products between embeddings~\cite{NIPS2017_3f5ee243}.

\begin{figure}[!t]
\centering
\includegraphics[width=0.48\textwidth]{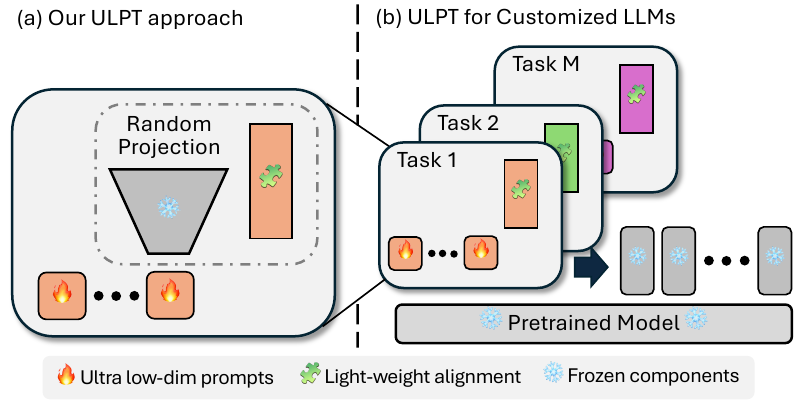} 
\caption{Overview of our approach. (a) ULPT up-projects ultra-low-dimensional embeddings with a random but fixed matrix. (b) ULPT can significantly reduce parameters storage for LLMs customization.}
\label{fig: ulpt diagram}
\end{figure}

In addition, the random projection into ultra-low dimensions introduces a controllable tradeoff between prompt dimension and length under a fixed parameter budget. We empirically demonstrate that allocating more tokens with lower-dimensional embeddings yields greater expressivity than using fewer high-dimensional tokens. This makes ULPT well-suited for massive LLM customization, such as per-user tuning while keeping storage footprints minimal, as shown in Figure~\ref{fig: ulpt diagram}b.

We evaluated ULPT across over 20 NLP tasks, including GLUE~\cite{wang-etal-2018-glue} and SuperGLUE~\cite{wang2019superglue} for language understanding, MRQA~\cite{fisch-etal-2019-mrqa} for question answering, GSM8K~\cite{cobbe2021gsm8k} and MBPP~\cite{austin2021program} for complex reasoning, as well as four additional tasks covering commonsense reasoning.
The results demonstrate that prompt tuning via ultra-low-dimensional optimization matches or surpasses the performance of fully parameterized prompt tuning while saving up to 98\% of trainable parameters. With an appropriate dimension, ULPT outperforms recent parameter-efficient fine-tuning methods, while requiring much fewer trainable parameters. 

In summary, our main contributions include:
\begin{itemize}
    \item We introduce ULPT, which optimizes prompts in a low-dimensional space with a random up-projection, drastically reducing trainable parameters while maintaining performance.
    \item Theoretically, we show that ULPT effectively approximates high-rank structures, which preserves embedding relational structures that are essential for attention mechanisms in LLMs.
    \item Empirically, we demonstrate that ULPT matches or surpasses vanilla prompt tuning across over 20 NLP tasks while saving trainable parameters by up to 98\%. Scaling to higher dimensions for optimization, it outperforms recent efficient tuning methods with much fewer trainable parameters.
\end{itemize}

\section{Related Work}
\paragraph{Parameter-efficient fine-tuning.}
With the rapid growth of pretrained neural networks, researchers have investigated parameter-efficient fine-tuning methods that update only a small set of parameters while maintaining high performance. One straightforward way is to tune specific components of the model. For example, BitFit updates only the bias terms~\cite{ben-zaken-etal-2022-bitfit}, and LayerNorm tuning only trains the layer-norm parameters~\cite{zhao2024tuning}.
Another line of work involves introducing and training small, task-specific non-linear modules, such as Adapters~\cite{pmlr-v97-houlsby19a} and AdapterDrop~\cite{ruckle-etal-2021-adapterdrop}. Other methods steer the activation representations either globally~\cite{wu2024reft, pan2024lisa} or locally~\cite{ yin2024lofit}.

Two prominent paradigms are low-rank adaptation (LoRA; \citeauthor{hu2022lora}, \citeyear{hu2022lora}) and prompt tuning methods~\cite{lester-etal-2021-power}, which are more related to our work. They will be further elaborated below.

\paragraph{Low-rank adaptation.} \citet{hu2022lora} assume that weight updates can be approximated by low-rank matrices and propose a low-rank adaptation (LoRA) method for fine-tuning a model.  Building upon this foundational work, many extensions have been developed to enhance LoRA’s performance. For example, ReLoRA~\cite{lialin2024relora} iteratively trains and merges low-rank adapters to achieve high-rank updates. 
\citet{hayou2024lora} propose learning low-rank matrices with different learning rates.
\citet{wu2024mixture} explore training a mixture of LoRA modules and leverage dynamic routing mechanisms for different task distributions or domains.

However, for large models, LoRA still requires a considerable number of trainable parameters, hindering efficient storage of task adaptations. To address this limitation, several studies have explored ways to further improve parameter efficiency. For example, VeRA~\cite{kopiczko2024vera} approximates LoRA weight update by using random matrices combined with two trainable scaling vectors. FourierFT~\cite{gao2024parameterefficient} learns a sparse set of spectral coefficients in the frequency domain and reconstructing weight updates via inverse Fourier transform. While these methods fine-tune in the weight space, our ULPT operates in the input embedding space and achieves better performance with lower training memory.

\paragraph{Prompt tuning.} 
\citet{shin-etal-2020-autoprompt} introduce the concept of learning prompt tokens to elicit knowledge from LLMs. Subsequently, \citet{lester-etal-2021-power} extend this idea to continuous prompt tuning, where prompt embeddings are optimized through gradient descent. 
Building on this, 
\citet{shi2024dept} observe that redistributing parameters to learn offsets for input token embeddings can enhance performance. \citet{lan-etal-2025-efficient} decompose prompt embeddings using SVD for more meaningful initialization. In parallel, multi-task prompt tuning has been explored, where the learned prompt parameters are reused across different tasks~\cite{wang2023multitask}. Closely related to our work, \citet{xiao-etal-2023-decomposed} decompose the prompt embedding matrix into two low-rank components: a low-dimensional prompt matrix and a learnable up-projection matrix. By contrast, our ULPT pushes parameter reduction further by using frozen and random up-projection matrix, which drastically lowers the number of trainable parameters while preserving performance. Our method is supported by random-projection theory~\cite{10.1145/502512.502546} and its recent applications in gradient compression~\cite{hao2024flora}.

\section{Methodology}
\subsection{Problem Formulation}
Prompt tuning introduces learnable token embeddings in the input layer of a language model~\cite{lester-etal-2021-power}. These embeddings are optimized via gradient descent based on the task-specific loss signals. During optimization, the model weights remain frozen, while the gradient is backpropagated to the input layer to update the learnable embeddings. Typically, learnable prompt embeddings  $\bm{e}_1, \cdots, \bm{e}_n \in \sR^d$ serve as a prefix~\cite{li-liang-2021-prefix}, followed by the text prompt, which is tokenized and represented by token embeddings $\bm{x}_1, \cdots, \bm{x}_l \in \sR^d$. Overall, the LLM has an input in the form of
\begin{align}
     (\bm{e}_1, \bm{e}_2, \cdots, \bm{e}_n, \bm{x}_1, \bm{x}_2, \cdots, \bm{x}_l)
\end{align}
where $n$ is a predefined prompt length and $l$ represents the length of the tokenized text.
The objective is to optimize the prompt embedding matrix $\mE \in \sR^{n \times d}$ over a given dataset \( \mathcal{D} \) based on the conditional log-likelihood:
\begin{align}
    \underset{\mE}{\arg\max} \sum_{(x, y) \in \mathcal{D}} \log P( y \mid \mE, x)
\end{align}
where \( (x, y) \in \mathcal{D} \) represents input--output pairs in a dataset.

\begin{figure}[!t]
\centering
\includegraphics[width=0.49\textwidth]{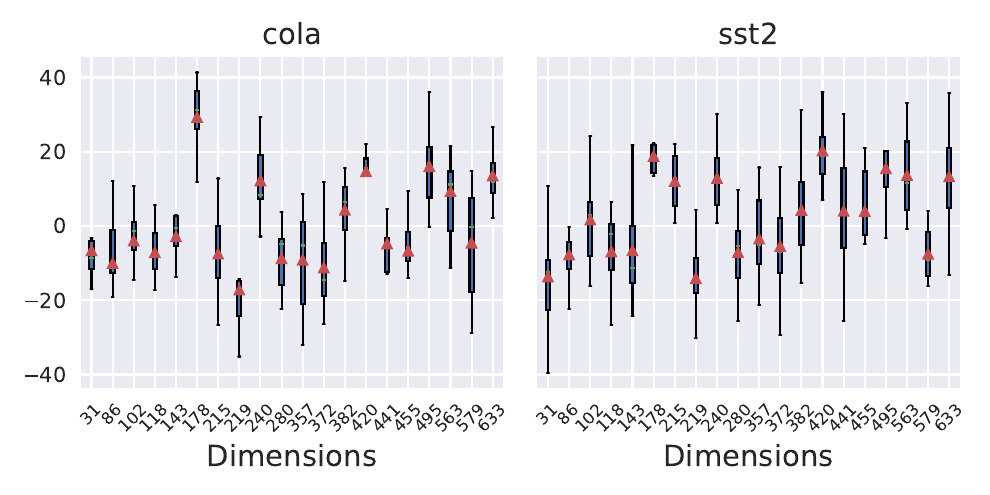} 
\caption{Distribution of prompt embedding values over 100 prompt tokens. We randomly selected $20$ dimensions from the original prompt embeddings, which have 768 dimensions as in the T5-base model.}
\label{fig: dims distribution}
\end{figure}

\subsection{Ultra-Low-Dimensional Prompt Tuning}\label{section: ulpt}
The learnable prompt embeddings do not inherently need to match the model dimension $\sR^d$ due to the low intrinsic dimensionality of downstream tasks~\cite{aghajanyan-etal-2021-intrinsic, qin2022exploringuniversalintrinsictask}. Inspired by low-rank adaptation~\cite{hu2022lora}, the prompt embedding matrix $\mE$ can be decomposed into the product of two matrices: $\mE = \mZ  \mP$, where $\mZ \in \sR^{n \times r}$ represents the prompt embeddings in an ultra-low $r$-dimensional space, and $\mP \in \sR^{r \times d}$ is a projection matrix that maps the low-dimensional embeddings back to the model's embedding space. 

A na\"ive implementation of this decomposition,  as in DPT~\cite{xiao-etal-2023-decomposed}, treats both $\mZ$ and $\mP$ as learnable parameters. This reduces the number of trainable parameters to $nr + rd$. However, the $rd$ term quickly becomes dominant as the model dimension $d$ grows or when separate $r \times d$ matrices must be maintained for multiple tasks, making DPT scale poorly in both parameter count and storage cost.

To address this limitation, we propose an ultra-low-dimensional prompt tuning (ULPT) method that only learns $r$-dimensional prompt embeddings $\bm Z$, while keeping the projection $\mP$ \textit{randomly initialized} and \textit{frozen} during training, denoted by $\tilde{\mP} \in \mathbb{R}^{r \times d}$. 
In implementation, we only need to store one single number---the random seed of a random number generator---to reconstruct $\tilde{\mP}$ when an LLM is loaded. 

In this way, we eliminate the need for storing the up-project matrix entirely, reducing the learnable parameters from $nr+rd$ to $nr$ (plus one extra random seed). Empirically, we find that freezing $\tilde{\mP}$ mitigates overfitting, particularly when fine-tuning on small datasets.

In our pilot study, we observe that typical prompt embeddings $\mE$, even without low-rank treatment, exhibit significant variation across different dimensions, as shown in Figure~\ref{fig: dims distribution}. For comparison, we report the distribution of pretrained embeddings in Appendix~\ref{appendix: Distribution of Pretrained Embeddings}, showing that they exhibit much smaller variation than the learned task-specific prompt embeddings. These variations may hinder effective training, therefore, we further introduce a learnable \textit{shift} embedding $\bm{b} \in \mathbb{R}^d$ and a learnable \textit{scale} embedding $\bm{s} \in \mathbb{R}^d$ to adjust the projected embeddings to ensure better alignment with the varying distributions across dimensions. Notice that the shift and scale embeddings are shared across different prompt token positions, but may vary for different tasks.

Specifically, an entry $\hat e_{ij}$ in the up-projected embedding matrix $\hat{\bm E}$ has the following form:
\begin{align}\label{eqn: reparameterization}
    \hat{e}_{ij} = \left( \sum_{k=1}^r z_{ik} \tilde {p}_{kj} \right) {s}_j + {b}_j,
\end{align}
where $z_{ik}$ and $\tilde p_{kj}$ are an entry in $\mZ$ and $\tilde\mP$ matrices, respectively; $s_j$ and $b_j$ are an entry in $\bm s$ and $\bm b$ vectors, respectively. 

Such a treatment introduces two $d$-dimensional vectors, resulting in the total number of trainable parameters being $nr + 2d$. This is significantly more parameter-efficient than full-dimension prompt tuning with $nd$-many parameters~\cite{lester-etal-2021-power} and vanilla low-rank prompt tuning with $(nr+rd)$-many parameters~\cite{xiao-etal-2023-decomposed}.

\subsection{Theoretical Analyses}
We first show that an ultra low-dimensional space can capture the structure of the original embeddings (i.e., expressiveness). We then show the convergence of gradient descent with our random projection (i.e., optimization).

\paragraph{Expressiveness.}

Our low-dimensional parameterization approximately captures high-dimensional structure with high confidence. To show this, we first state the following lemma.

\begin{lemma}\label{lem:im}
      Sample a random matrix $\mA \in \sR^{r \times m}$ such that each element follows the standard Gaussian distribution. Let $\epsilon \in (0, 1/2]$ and $r \in \sN_+$. There exists a constant $c$ such that \begin{align}
        \Pr\left( \left| \frac{(1/{\sqrt{r}}) \|  \mA\vx \| - \|\vx \|}{\|\vx\|} \right| \ge \epsilon \right) \le \frac{2}{ \exp\left(\epsilon^2r\right/c)}
    \end{align}
    for any $\vx \in \sR^d$. 
\end{lemma}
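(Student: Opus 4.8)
The statement is a standard Johnson–Lindenstrauss-type concentration bound, so the plan is to reduce it to a one-dimensional Gaussian concentration estimate and then apply a Chernoff/Laplace-transform argument. First I would note that the claim is scale-invariant in $\vx$: both the event and its probability are unchanged if we replace $\vx$ by $\vx/\|\vx\|$, so without loss of generality we may assume $\|\vx\|=1$, and the event becomes $\bigl|\,(1/\sqrt r)\|\mA\vx\| - 1\,\bigr| \ge \epsilon$. Squaring (and using that for $\epsilon\in(0,1/2]$ the event $\{|u-1|\ge\epsilon\}$ is contained in $\{|u^2-1|\ge\epsilon\}$ after a harmless adjustment of constants, or more cleanly working directly with the two one-sided events $u\ge 1+\epsilon$ and $u\le 1-\epsilon$), it suffices to control $S := \|\mA\vx\|^2 = \sum_{k=1}^r \bigl(\sum_{j} a_{kj}x_j\bigr)^2$.

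The second step is the key distributional observation: for fixed unit $\vx$, each coordinate $(\mA\vx)_k = \sum_j a_{kj}x_j$ is a linear combination of independent standard Gaussians with weight vector $\vx$ of unit norm, hence is itself a standard Gaussian $N(0,1)$; and the $r$ coordinates are independent because the rows of $\mA$ are independent. Therefore $S$ is exactly a chi-squared random variable with $r$ degrees of freedom, with $\E[S]=r$. So the whole problem collapses to: show $\Pr(|S - r| \ge \epsilon r)$ — or rather the appropriate one-sided versions $\Pr(S \ge (1+\epsilon)^2 r)$ and $\Pr(S \le (1-\epsilon)^2 r)$, equivalently up to constants $\Pr(S\ge(1+\epsilon)r)$ and $\Pr(S\le(1-\epsilon)r)$ — is at most $\exp(-\epsilon^2 r/c)$ for a universal $c$.

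The third step is the Chernoff bound for chi-squared tails. Using the moment generating function $\E[e^{tS}] = (1-2t)^{-r/2}$ for $t<1/2$, Markov's inequality gives, for the upper tail, $\Pr(S\ge(1+\epsilon)r)\le \bigl((1-2t)e^{2t(1+\epsilon)}\bigr)^{-r/2}$ minimized at $t = \epsilon/(2(1+\epsilon))$, yielding the bound $\exp\bigl(-\tfrac r2(\epsilon - \ln(1+\epsilon))\bigr)$; the lower tail is analogous with $\exp\bigl(-\tfrac r2(-\epsilon - \ln(1-\epsilon))\bigr)$. Then one uses the elementary inequalities $\epsilon - \ln(1+\epsilon) \ge \epsilon^2/2 - \epsilon^3/3$ and $-\epsilon-\ln(1-\epsilon)\ge \epsilon^2/2$, valid on $(0,1/2]$, to lower-bound both exponents by $c'\epsilon^2 r$ for an absolute constant $c'$ (e.g. $c'=1/6$ suffices comfortably on this range). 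Summing the two one-sided bounds via a union bound produces the factor $2$ in the numerator and an absolute constant $c$ in the denominator, matching the claimed form.

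The only mild subtlety — and the step I would be most careful about — is the bookkeeping in passing from the square-root event $|{\textstyle\frac1{\sqrt r}}\|\mA\vx\|-1|\ge\epsilon$ to the squared event on $S$: one must check that $\{u\ge 1+\epsilon\}\subseteq\{u^2\ge 1+\epsilon\}$ and $\{u\le 1-\epsilon\}\subseteq\{u^2\le 1-\epsilon\}$ (both immediate since $u\ge 0$ and $(1\pm\epsilon)^2$ straddle $1\pm\epsilon$ appropriately for $\epsilon\in(0,1/2]$), so that the chi-squared tail bounds at thresholds $(1\pm\epsilon)r$ indeed cover the original event, possibly after absorbing a constant factor into $c$. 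Everything else is routine; no step presents a genuine obstacle, and the restriction $\epsilon\le 1/2$ is used precisely to make the logarithmic inequalities in the last step hold with a clean universal constant.
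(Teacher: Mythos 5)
Your proof is correct, but note that the paper does not actually prove this lemma: it states it and cites \citet{indyk1998approximate}, with a further pointer to the elementary JL proofs of \citet{dasgupta2003elementary} and \citet{matouvsek2008variants}. What you have written is essentially a faithful reconstruction of the Dasgupta--Gupta argument that the paper refers the reader to, so there is no methodological divergence to report. The ingredients are exactly right: scale-invariance to reduce to $\|\vx\|=1$; the rotational-invariance observation that $(\mA\vx)_k \sim N(0,1)$ i.i.d., so $\|\mA\vx\|^2 \sim \chi^2_r$; the Chernoff bound via the $\chi^2$ moment generating function $(1-2t)^{-r/2}$ with optimizer $t = \epsilon/(2(1+\epsilon))$; and the elementary logarithmic inequalities to extract a universal constant on $(0,1/2]$. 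Your bookkeeping for passing from the $\sqrt{\,}$-scale event to the $S$-scale thresholds $(1\pm\epsilon)r$ (using $(1+\epsilon)^2 \ge 1+\epsilon$ and $(1-\epsilon)^2 \le 1-\epsilon$ for $\epsilon\in(0,1]$) is also correct, and yields $c=6$ from the weaker of the two one-sided exponents $\tfrac{r}{2}(\epsilon-\ln(1+\epsilon))\ge \tfrac{r\epsilon^2}{6}$ and $\tfrac{r}{2}(-\epsilon-\ln(1-\epsilon))\ge\tfrac{r\epsilon^2}{4}$. One cosmetic point: the lemma as printed has a dimension mismatch ($\mA\in\sR^{r\times m}$ but $\vx\in\sR^d$); you implicitly and reasonably read $m=d$, which is surely what is intended.
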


This result is adapted from \citet{indyk1998approximate}. Essentially, the lemma characterizes the high-probability bound of the well known Johnson–Lindenstrauss lemma~\cite{dasgupta2003elementary, matouvsek2008variants}. 
Based on this, we formally show the expressiveness of our ultra low-dimensional embeddings in the following theorem.

\begin{restatable}{theorem}{expr}\label{thm:existence}
Let $\ve_1,\dots, \ve_n \in \mathbb{R}^d$ be the embedding vectors in a high-dimensional space. Let $\mP \in \mathbb{R}^{r \times d}$ be a random projection matrix where each element $p_{i,j} \sim \mathcal{N}(0,1/r)$, and let $\vz_i = \mP\ve_i \in \mathbb{R}^r$ be the projected low-dimensional vectors.

Then, with confidence at least $1-\delta$, we have
\begin{align}
    \resizebox{1.02\columnwidth}{!}{%
    $(1-\epsilon) \| \ve_i - \ve_j \| \le \| \vz_i - \vz_j \| \le (1+\epsilon) \| \ve_i - \ve_j \|$
    }
\end{align}
for all $i,j \in [n]$, given that $r \ge C \epsilon^{-2} \log (2n/\delta)$ for a sufficiently large constant $C$.
\end{restatable}

\begin{proof}
See Appendix~\ref{apx:proof:expr}.
\end{proof}

In essence, our theorem asserts that by projecting data using a random matrix $\mP$, the pairwise $L^2$ distances of the original high-dimensional vectors are preserved for all $(i, j)$ pairs with high probability. Crucially, the projected dimension $r$ scales logarithmically with the number of embeddings $n$, rather than the original dimension $d$, demonstrating a favorable property of scaling. 

It should be noted that, although our theorem uses $L^2$ as the metric, it can be extended to the dot-product metric by $\|\vx - \vy\|^2 = \|\vx\|^2 + \|\vy\|^2 - 2 \vx \cdot \vy$. Practically, since LLMs compute attention using pairwise dot products between embeddings~\cite{NIPS2017_3f5ee243}, Theorem~\ref{thm:existence} implies that our up-projected low-dimensional prompts preserve the relational structure of full-dimensional embeddings.

\paragraph{Optimization.} The above theorem shows the expressiveness of the low-dimensional space. We assert in the following theorem that, given a random up-projection matrix, the optimal low-dimensional embeddings can be learned by gradient descent under mild assumptions.

\begin{restatable}{theorem}{optim}\label{thm:optim}
Assume the original loss function $\gL$  is Polyak--Lojasiewic and element-wise Lipschitz on the original $d$-dimensional embeddings. Let $\mP \in \sR^{r\times d}$ be a given full-rank  random Gaussian matrix (i.e., rank $r$), and our parametrization be $\hat\ve_i = \diag(\vs) \mP^\top \vz_i + \vb$. With a proper learning rate schedule $\eta_1, \eta_2, \dots$, our parameters $\vx = [\vb, \vs, \vz_1, \dots, \vz_n]$ converge to the global optimum with gradient descent if $\vs$ is always non zero.
\end{restatable}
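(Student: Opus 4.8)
The plan is to place the statement inside the classical gradient-descent-for-PL argument: if a function is $L$-smooth and satisfies a Polyak--Lojasiewicz (PL) inequality with constant $\mu$, then gradient descent with step size at most $1/L$ converges linearly to the global minimum. So it suffices to verify these two ingredients for the reparametrized objective $f(\vx) := \gL(\hat\ve_1,\dots,\hat\ve_n)$, where $\vx = [\vb,\vs,\vz_1,\dots,\vz_n]$ and $\hat\ve_i = \diag(\vs)\,\mP^\top\vz_i + \vb$; write $\phi$ for this (polynomial) reparametrization map, so that $f = \gL\circ\phi$.

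\emph{Smoothness and descent.} Because $\phi$ is affine in $\vb$ and bilinear in $(\vs,\vz_i)$, on any bounded region its Jacobian is Lipschitz with a constant controlled by the region's diameter and $\|\mP\|$; composing with $\gL$, whose gradient is element-wise Lipschitz, makes $f$ $L$-smooth on bounded sets. This yields the descent lemma, and with a step-size schedule $\eta_t$ kept below the reciprocal of the current local smoothness — a schedule rather than a constant, since $\|\vs\|,\|\vz_i\|$ and hence $L$ drift during training — we get $f(\vx_{t+1})\le f(\vx_t)$. Hence $f(\vx_t)$ decreases to a limit, $\nabla f(\vx_t)\to 0$, and, using that $\vs$ stays bounded away from $0$ (which stops $\vz_i$ from escaping to infinity while $\hat\ve_i$ stays bounded), the iterates remain in a region on which $L$ and $s_{\min}:=\inf_t\min_j|s_{t,j}|>0$ are uniform.

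\emph{PL inequality for $f$.} By the chain rule, $\nabla_{\vz_i}f = \mP\diag(\vs)\,\nabla_{\ve_i}\gL$, $\nabla_{\vb}f = \sum_i\nabla_{\ve_i}\gL$, and $\nabla_{\vs}f = \sum_i\nabla_{\ve_i}\gL\odot(\mP^\top\vz_i)$. When $\vs$ is nonzero $\diag(\vs)$ is invertible, and when $\mP$ has rank $r$ the per-token Jacobian $\diag(\vs)\,\mP^\top\in\sR^{d\times r}$ has full column rank with smallest singular value at least $s_{\min}\,\sigma_{\min}(\mP)>0$ — exactly what the hypotheses ``$\vs$ nonzero'' and ``$\mP$ full-rank'' provide. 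One then upgrades this, together with the $2d$ additional directions carried by the full-dimensional shift $\vb$ and scale $\vs$ (the reason the parameter count is $nr+2d$ rather than $nr$), to a bound $\|\nabla f(\vx)\|^2 \ge \kappa\,\|\nabla\gL(\phi(\vx))\|^2$ with $\kappa$ depending only on $s_{\min}$ and $\sigma_{\min}(\mP)$. Given such a $\kappa$, the PL property of $\gL$ and the trivial inequality $\gL^\star\le f^\star$ give $\tfrac12\|\nabla\gL(\phi(\vx))\|^2 \ge \mu\,(\gL(\phi(\vx))-\gL^\star) \ge \mu\,(f(\vx)-f^\star)$, whence $\tfrac12\|\nabla f(\vx)\|^2 \ge \kappa\mu\,(f(\vx)-f^\star)$; feeding this PL inequality and the smoothness bound into the PL--GD theorem gives $f(\vx_t)\to f^\star$, i.e.\ convergence to the global optimum.

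The main obstacle is the bound $\|\nabla f(\vx)\|^2\ge\kappa\,\|\nabla\gL(\phi(\vx))\|^2$. When $\phi$ is surjective — e.g.\ $n=1$, where $\nabla_{\vb}f$ already equals $\nabla\gL$ — it is immediate, but in general $\phi$ does not map onto $\sR^{nd}$, and restricting a PL function to a proper subspace need not stay PL (already $(x,y)\mapsto(x^2-y)^2$, restricted to $y=0$, fails PL). So the learnable shift and scale, plus the non-degeneracy of $\diag(\vs)\mP^\top$, must be used to argue the reachable set is rich enough that $f$ has no spurious stationary points off the stationary set of $\gL$. I expect the bulk of the technical work — and any residual mild assumptions, or a constraint on $n$ relative to $r$ and $d$ — to sit precisely there; keeping $\vs$ away from zero is what preserves the rank of $\diag(\vs)\mP^\top$ along the whole trajectory, which is why it appears as a hypothesis.
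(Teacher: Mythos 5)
Your proposal follows the same high-level plan as the paper's proof: establish a Lipschitz/smoothness bound and a Polyak--Lojasiewicz inequality for the reparametrized objective, then feed both into the standard PL-plus-smoothness convergence argument with a per-iterate step size $\eta_t = 1/\beta'(\vx_t)$ because the local smoothness constant drifts with $\|\vs\|$ and $\|\vz_i\|$. Your smoothness paragraph tracks the paper's Lemmas~\ref{lem:element-to-global} and~\ref{lem:lip} closely, and you are in fact a bit more careful than the paper in noting that one must also argue the iterates stay in a bounded region.

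The crucial point is the PL-inheritance step, which you explicitly flag as the obstacle and decline to close. Your skepticism is well founded, because the paper's Lemma~\ref{lem:pl} does not actually close it. After discarding the $\vb$- and $\vs$-gradient terms, the paper asserts
\begin{align}
\frac{1}{2}\sum_{i=1}^n\left\|\mP\diag(\vs)\frac{\partial\gL}{\partial\hat\ve_i}\right\|^2 \;\ge\; \frac{1}{2}\,\sigma_{\min}^2(\mP)\,\sigma_{\min}^2(\vs)\sum_{i=1}^n\left\|\frac{\partial\gL}{\partial\hat\ve_i}\right\|^2,
\end{align}
but $\mP\diag(\vs)\in\sR^{r\times d}$ with $r<d$ has a nontrivial $(d-r)$-dimensional kernel, so the pointwise bound $\|\mP\diag(\vs)\vw\|\ge\sigma_{\min}(\mP)\sigma_{\min}(\vs)\|\vw\|$ is simply false for any nonzero $\vw$ in that kernel, and there is no reason the per-token gradients $\partial\gL/\partial\hat\ve_i$ avoid it. Reading $\sigma_{\min}$ as the smallest nonzero singular value makes the displayed inequality wrong; reading it as the smallest singular value of the rectangular map (hence zero) makes the resulting PL constant $\mu'=0$ and the conclusion vacuous. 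Either way, the lemma does not show $\hat\gL$ is PL. The difficulty you articulated---that a PL function composed with a non-surjective parametrization need not remain PL, that the extra $2d$ directions from $\vb$ and $\vs$ must be used rather than thrown away, and that some further structural assumption (or constraint relating $n$, $r$, $d$) is likely needed---is precisely where the paper's argument breaks; the paper substitutes an invalid singular-value estimate for the work you correctly anticipated would be required.
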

\begin{proof}
    See Appendix~\ref{apx:proof:optim}.
\end{proof}

Theorem~\ref{thm:optim} shows that, even with the na\"ive gradient descent, the fixed random matrix $\mP$ does not hinder the optimization procedure. By combining Theorem~\ref{thm:existence}, we theoretically justify our overall practice of ULPT.

\begin{table*}[!t]
\centering
\resizebox{\textwidth}{!}{%
\begin{tabular}{lrccccccccccccccc}
\toprule
 & \multirow{2}{*}{\raisebox{-6mm}{\shortstack{\textbf{\#Param/} \\ \textbf{Task}}}} & \multicolumn{9}{c}{\textbf{GLUE}} & \multicolumn{6}{c}{\textbf{SuperGLUE}} \\
\cmidrule(l){3-11} \cmidrule(l){12-17}
\textbf{Method} & & \textbf{MNLI} & \textbf{QQP} & \textbf{QNLI} & \textbf{SST-2} & \textbf{STS-B} & \textbf{MRPC} & \textbf{RTE} & \textbf{CoLA} & \cellcolor{gray!20}\textbf{Avg.} & \textbf{MultiRC\!\!\!\!} & \textbf{Bool} & \textbf{WiC} & \textbf{WSC} & \textbf{CB} & \textbf{\cellcolor{gray!20}Avg.} \\
\midrule
\multicolumn{17}{c}{\textbf{Single-Task Learning}} \\
\midrule
Fine-tuning & 220M & 86.8 & 91.6 & 93.0 & 94.6 & 89.7 & 90.2 & 71.9 & 61.8 & \cellcolor{gray!20}84.9 & 72.8 & 81.1 & 70.2 & 59.6 & 85.7 & \cellcolor{gray!20}73.9 \\
Adapter & 1.9M & 86.5 & 90.2 & 93.2 & 93.8 & 90.7 & 85.3 & 71.9 & 64.0 & \cellcolor{gray!20}84.5 & 75.9 & 82.5 & 67.1 & 67.3 & 85.7 & \cellcolor{gray!20}75.7 \\
AdapterDrop & 1.1M & 86.3 & 90.2 & 93.2 & 93.6 & 91.4 & 86.3 & 71.2 & 62.7 & \cellcolor{gray!20}84.4 & 72.9 & 82.3 & 68.3 & 67.3 & 85.7 & \cellcolor{gray!20}75.3 \\
BitFit & 280K & 85.3 & 90.1 & 93.0 & 94.2 & 90.9 & 86.8 & 67.6 & 58.2 & \cellcolor{gray!20}83.3 & 74.5 & 79.6 & 70.0 & 59.6 & 78.6 & \cellcolor{gray!20}72.5 \\
LoRA & 3.8M & 86.3 & 89.0 & 93.2 & 94.3 & 90.0 & 90.1 & 75.5 & 63.3 & \cellcolor{gray!20}85.3 & 72.6 & 81.3 & 68.3 & 67.3 & 92.9 & \cellcolor{gray!20}76.5 \\
LST & 3.8M & 85.6 & 88.8 & 93.3 & 94.0 & 90.7 & 90.4 & 71.9 & 58.1 & \cellcolor{gray!20}84.1 & -- & -- & -- & -- & -- & \cellcolor{gray!20}-- \\
\hdashline
PT\textsuperscript{\dag} & 76.8K & 84.6 & 90.2 & 93.3 & 94.4 & 90.5 & 88.7 & 77.7 & 59.5 & \cellcolor{gray!20}84.9 & 72.3 & 80.4 & 67.7 & 67.3 & 78.6 & \cellcolor{gray!20}73.3 \\
DePT & 76.8K & 85.0 & 90.4 & 93.2 & 94.2 & 90.8 & 90.7 & 79.1 & 63.8 & \cellcolor{gray!20}85.9 & 74.3 & 79.3 & 68.7 & 67.3 & 92.9 & \cellcolor{gray!20}76.5 \\
DPT\textsuperscript{\dag}(r=10) & 9.0K & 84.4 & 90.2 & 93.3 & 94.6 & 91.2 & 87.7 & 77.7 & 57.8 & \cellcolor{gray!20}84.6 & 74.5 & 78.7 & 66.8  & 67.3 & 71.4 & \cellcolor{gray!20}71.7 \\
DPT\textsuperscript{\ddag}(r=64) & 55.6K & 85.2 & 90.3 & 92.9 & 93.6 & 90.4 & 88.2 & 79.1 & 63.5 & \cellcolor{gray!20}85.4 & 73.2 & 80.1 & 63.0  & 67.3 & 85.7  & \cellcolor{gray!20}73.9 \\
\hdashline
ULPT (r=2) & 1.7K & 81.9 & 90.3 & 92.3 & 92.9 & 89.8 & 89.2 & 76.3 & 59.5 & \cellcolor{gray!20}84.0 & 73.4 & 76.7 & 67.4 & 67.3 & 71.4 & \cellcolor{gray!20}71.2 \\
ULPT (r=16) & 3.1K & 82.9 & 90.0 & 93.1 & 93.8 & 90.5 & 89.2 & 80.6 & 54.3 & \cellcolor{gray!20}84.3 & 72.6 & 77.7 & 66.1 & 67.3 & 89.3 & \cellcolor{gray!20}74.6 \\
ULPT (r=64) & 7.9K & 84.9 & 90.3 & 93.1 & 93.5 & 90.7 & 90.2 & 81.3 & 63.7 & \cellcolor{gray!20}\textbf{86.0} & 73.1 & 78.2 & 69.0 & 67.3 & 96.4 & \cellcolor{gray!20}\textbf{76.8} \\
ULPT (r=256) & 27.1K & 85.5 & 90.3 & 92.8 & 94.3 & 90.6 & 90.7 & 76.3 & 63.7 & \cellcolor{gray!20}85.5 & 74.3 & 79.9 & 63.3 & 67.3 & 89.3 & \cellcolor{gray!20}74.8 \\
\midrule
\multicolumn{17}{c}{\textbf{Multi-Task Learning \& Transfer Learning}} \\
\midrule
Fine-tuning\textsuperscript{m} & 28M & 85.7 & 91.1 & 92.0 & 92.5 & 88.8 & 90.2 & 75.4 & 54.9 & \cellcolor{gray!20}83.8 & 74.4 & 81.1 & 70.0 & 71.2 & 85.7 & \cellcolor{gray!20}76.1 \\
Adapter\textsuperscript{m} & 1.8M & 86.3 & 90.5 & 93.2 & 93.0 & 89.9 & 90.2 & 70.3 & 61.5 & \cellcolor{gray!20}84.4 & 72.6 & 82.3 & 66.5 & 67.3 & 89.3 & \cellcolor{gray!20}75.6 \\
HyperFormer\textsuperscript{m} & 638K & 85.7 & 90.0 & 93.0 & 93.0 & 89.7 & 87.2 & 75.4 & 63.7 & \cellcolor{gray!20}84.8 & 72.9 & 82.5 & 69.0 & 67.3 & 85.7 & \cellcolor{gray!20}75.4 \\
HyperDecoder\textsuperscript{m} & 1.8M & 86.0 & 90.5 & 93.4 & 94.0 & 90.5 & 87.7 & 71.7 & 55.9 & \cellcolor{gray!20}83.7 & 70.4 & 78.8 & 67.1 & 61.5 & 82.1 & \cellcolor{gray!20}72.0 \\
SPoT\textsuperscript{t} & 76.8K & 85.4 & 90.1 & 93.0 & 93.4 & 90.0 & 79.7 & 69.8 & 57.1 & \cellcolor{gray!20}82.3 & 74.0 & 77.2 & 67.0 & 50.0 & 46.4 & \cellcolor{gray!20}62.9 \\
ATTEMPT\textsuperscript{t} & 232K & 84.3 & 90.3 & 93.0 & 93.0 & 89.7 & 85.7 & 74.3 & 57.4 & \cellcolor{gray!20}83.4 & 74.4 & 78.8 & 66.8 & 53.8 & 78.6 & \cellcolor{gray!20}70.5 \\
MPT\textsuperscript{t} & 77.6K & 85.9 & 90.3 & 93.1 & 93.8 & 90.4 & 89.1 & 79.4 & 62.4 & \cellcolor{gray!20}85.6 & 74.8 & 79.6 & 69.0 & 67.3 & 79.8 & \cellcolor{gray!20}74.1 \\
ATTEMPT\textsuperscript{t+m} & 96K & 83.8 & 90.0 & 93.1 & 93.7 & 90.8 & 86.1 & 79.9 & 64.3 & \cellcolor{gray!20}85.2 & 74.4 & 78.5 & 66.5 & 69.2 & 82.1 & \cellcolor{gray!20}74.1 \\
MPT\textsuperscript{t+m} & 10.5K & 84.3 & 90.0 & 93.0 & 93.0 & 90.4 & 89.2 & 82.7 & 63.5 & \cellcolor{gray!20}85.8 & 74.8 & 79.6 & 70.2 & 67.3 & 89.3 & \cellcolor{gray!20}76.1 \\
\bottomrule
\end{tabular}%
}
\caption{Performance on GLUE and SuperGLUE benchmarks based on the T5-base model. We report standard evaluation metrics, namely, Pearson correlation for STS-B, F1 for MultiRC, and accuracy for other classification tasks. 
\textsuperscript{\dag}We replicate prompt tuning (PT; \citeauthor{lester-etal-2021-power}, \citeyear{lester-etal-2021-power}) and DPT~\cite{xiao-etal-2023-decomposed} using their default configurations. Our replicated PT results slightly exceed those reported in previous studies. All other baseline results are directly sourced from \citet{shi2024dept}.  \textsuperscript{\ddag}The suggested rank for DPT is r=10 based on \citet{xiao-etal-2023-decomposed}; we replicate DPT with r=64 for controlled comparison with our ULPT.   \textsuperscript{t}Transfer learning methods.  \textsuperscript{m}Multi-task learning methods, whose ``\#param/task'' scores are calculated based on the GLUE benchmark.}
\label{tab:glue-superglue}
\end{table*}

\section{Experiments}

\subsection{Experiments on Language Understanding Tasks}\label{sec: Experiments on Language Understanding Tasks}

\paragraph{Datasets.} We evaluate ULPT across $21$ NLP tasks in our main experiment, following prior work~\cite{asai-etal-2022-attempt, wang2023multitask, shi2024dept}. Those tasks are grouped into 4 categories: (1) \textbf{GLUE} is a benchmark suite consisting of various language understanding tasks, such as MNLI~\cite{williams-etal-2018-broad}, QQP~\cite{wang-etal-2018-glue}, QNLI~\cite{demszky2018transforming}, SST-2~\cite{socher-etal-2013-recursive}, STS-B~\cite{cer-etal-2017-semeval}, MRPC~\cite{dolan-brockett-2005-automatically}, RTE~\cite{giampiccolo-etal-2007-third} and CoLA~\cite{warstadt2019neural_cola}. (2) \textbf{SuperGLUE} extends GLUE by considering more challenging tasks with limited training data, consisting of MultiRC~\cite{khashabi-etal-2018-looking}, BoolQ~\cite{clark-etal-2019-boolq}, WiC~\cite{pilehvar-camacho-collados-2019-wic}, WSC~\cite{levesque2012winograd_wnli}, and CB~\cite{de2019commitmentbank}. (3) The \textbf{MRQA} 2019 Shared Tasks are a set of QA tasks to test LLM generation capabilities, consisting of Natural Questions~\cite{kwiatkowski-etal-2019-natural}, HotpotQA~\cite{yang-etal-2018-hotpotqa}, SearchQA~\cite{dunn2017searchqa}, and NewsQA~\cite{trischler-etal-2017-newsqa}. (4) \textbf{Other classification tasks} beyond the above test suites are also considered, including WinoGrande~\cite{10.1145/3474381}, Yelp-2~\cite{10.5555/2969239.2969312}, SciTail~\cite{Khot_Sabharwal_Clark_2018}, and PAWS-Wiki~\cite{zhang-etal-2019-paws}. Further details on these datasets are provided in Table~\ref{tab: datasets details} in Appendix~\ref{appendix: details in datasets}. 

% We further evaluate ULPT on two reasoning tasks: GSM8K~\cite{cobbe2021gsm8k} and MBPP~\cite{austin2021program}. Details and results are provided in Section~\ref{sec: reasoning experiments}.

\paragraph{Baselines.}  
We evaluate ULPT against full-model fine-tuning, serving as a strong but parameter-intensive baseline. Second, we include state-of-the-art parameter-efficient methods such as Adapter~\cite{pmlr-v97-houlsby19a}, AdapterDrop~\cite{ruckle-etal-2021-adapterdrop}, BitFit~\cite{ben-zaken-etal-2022-bitfit}, HyperFormer~\cite{mahabadi2021parameter}, HyperDecoder~\cite{ivison-peters-2022-hyperdecoders}, LoRA~\cite{hu2022lora}, and Ladder Side-Tuning (LST;~\citeauthor{sunglst}, \citeyear{sunglst}). Third, we compare ULPT with vanilla prompt tuning (PT) and its variants: DePT learns offsets to the input token embeddings while using a separate learning rate for the prompt embeddings~\cite{shi2024dept}, and DPT is closely related to ULPT as it decomposes prompt embeddings into low-rank matrices~\cite{xiao-etal-2023-decomposed}, but it differs from ours by learning the up-projection. Finally, we compare ULPT with transfer or multi-task learning methods, including SPoT~\cite{vu-etal-2022-spot}, ATTEMPT~\cite{asai-etal-2022-attempt}, and MPT~\cite{wang2023multitask}. 

\paragraph{Implementation details.} 
We use the T5-base model with $d=768$. Consistent with prior work~\cite{shi2024dept, xiao-etal-2023-decomposed}, we set the number of prompt tokens $n =100$ for the prompt embeddings $\mZ \in \sR^{n \times r}$. For the rank $r$, we evaluate three ultra-low configurations $r = 2$, $16$, $64$ that achieve over 90\% dimensionality compression, and a more expressive configuration of $r=256$. We initialize the frozen up-projection matrix $\tilde{\mP}$ with a standard normal distribution and find that ULPT is robust to alternative random initializations such as uniform. In our analysis (Section~\ref{sec: analysis}), T5-small ($d=512$) and T5-large model ($d=1024$) are considered to evaluate the generality of ULPT across different model sizes and input space dimensions. Further training details are provided in Appendix~\ref{appendix: implementation for main experiments}.

\begin{table*}[!t]
  \centering
  \resizebox{\linewidth}{!}{%
    \begin{tabular}{l
                    rccccc
                    rccccc}
      \toprule
      & \multicolumn{6}{c}{\textbf{Llama 1B}} 
      & \multicolumn{6}{c}{\textbf{Llama 3B}} \\
      \cmidrule(lr){2-4} \cmidrule(lr){5-7} \cmidrule(lr){8-10} \cmidrule(lr){11-13}
      \textbf{Method}
      & \textbf{\#Param} & \textbf{VRAM} & \textbf{Runtime} & \cellcolor{gray!10}\textbf{GSM8K} & \cellcolor{gray!10}\textbf{MBPP} & \cellcolor{gray!20}\textbf{Avg.}
      & \textbf{\#Param} &  \textbf{VRAM} & \textbf{Runtime} & \cellcolor{gray!10}\textbf{GSM8K} & \cellcolor{gray!10}\textbf{MBPP} & \cellcolor{gray!20}\textbf{Avg.} \\
      \midrule
      ICL (4-shot)
      & -   & - & -  & \cellcolor{gray!10}34.3 & \cellcolor{gray!10}21.1 & \cellcolor{gray!20}27.7
      & -   & - & -  & \cellcolor{gray!10}62.5 & \cellcolor{gray!10}23.9 & \cellcolor{gray!20}43.2 \\
      \hdashline
      LoRA (r=1)
      & 106.5K & 10.22 & 215.1   & \cellcolor{gray!10}38.5 & \cellcolor{gray!10}26.7 &\cellcolor{gray!20}32.6
      & 286.7K & 18.44 & 615.8 & \cellcolor{gray!10}62.9 & \cellcolor{gray!10}32.1 &\cellcolor{gray!20}47.5 \\
      LoRA (r=4)
      & 426.0K & 10.22 & 215.7  & \cellcolor{gray!10}40.1 & \cellcolor{gray!10}27.2 &\cellcolor{gray!20}33.7
      & 1.15M  & 18.45 & 618.5 & \cellcolor{gray!10}63.4 & \cellcolor{gray!10}34.3 &\cellcolor{gray!20}48.9 \\
      LoRA (r=8)
      & 852.0K & 10.23  & 216.5  & \cellcolor{gray!10}40.2 & \cellcolor{gray!10}24.7 &\cellcolor{gray!20}32.5
      & 2.29M  & 18.46 & 621.1 & \cellcolor{gray!10}62.2 & \cellcolor{gray!10}37.8 &\cellcolor{gray!20}50.0 \\
      \hdashline
      VeRA (r=1)
      & 41.0K  & 10.02 & 216.0  & \cellcolor{gray!10}39.3 & \cellcolor{gray!10}24.4 &\cellcolor{gray!20}31.9
      & 114.7K & 17.97 & 620.9 & \cellcolor{gray!10}65.5 & \cellcolor{gray!10}35.5 &\cellcolor{gray!20}50.5 \\
      VeRA (r=4)
      & 41.1K  & 10.02 & 216.6 & \cellcolor{gray!10}39.6 & \cellcolor{gray!10}27.8 &\cellcolor{gray!20}33.7
      & 114.9K & 17.97 & 621.5 & \cellcolor{gray!10}65.0 & \cellcolor{gray!10}34.4 &\cellcolor{gray!20}49.7 \\
      VeRA (r=8)
      & 41.2K  & 10.02 & 217.0 & \cellcolor{gray!10}40.9 & \cellcolor{gray!10}29.5 &\cellcolor{gray!20}35.2
      & 115.1K & 17.97 & 621.6 & \cellcolor{gray!10}65.7 & \cellcolor{gray!10}33.9 &\cellcolor{gray!20}49.8 \\
      \hdashline
      FourierFT (n=128)
      & 4.1K   & 10.53 & 278.1  & \cellcolor{gray!10}35.8 & \cellcolor{gray!10}21.5 &\cellcolor{gray!20}28.7
      & 7.2K   & 19.80  & 880.9  & \cellcolor{gray!10}63.1 & \cellcolor{gray!10}21.9 &\cellcolor{gray!20}42.5 \\
      FourierFT (n=512)
      & 16.4K  & 10.53 & 278.5 & \cellcolor{gray!10}34.9 & \cellcolor{gray!10}27.3 &\cellcolor{gray!20}31.1
      & 28.7K  & 19.80 & 881.5 & \cellcolor{gray!10}66.6 & \cellcolor{gray!10}35.3 &\cellcolor{gray!20}\textbf{51.0} \\
      FourierFT (n=1024)
      & 32.8K  & 10.54 & 278.5 & \cellcolor{gray!10}36.6 & \cellcolor{gray!10}25.9 &\cellcolor{gray!20}31.3
      & 57.3K  & 19.80 & 882.6 & \cellcolor{gray!10}65.5 & \cellcolor{gray!10}35.4 &\cellcolor{gray!20}50.5 \\
      \hdashline
      IA3 & 147.5K & 10.83 & 229.2 & \cellcolor{gray!10}39.7	& \cellcolor{gray!10}26.4 & \cellcolor{gray!20}33.1 & 286.7K & 19.15 & 636.1 & \cellcolor{gray!10}63.7 & \cellcolor{gray!10}36.7 & \cellcolor{gray!20}50.2\\
      PT
      & 20.5K  & 9.79 & 203.6 & \cellcolor{gray!10}40.2 & \cellcolor{gray!10}24.7 &\cellcolor{gray!20}32.5
      & 30.7K  & 17.18 & 589.5 & \cellcolor{gray!10}65.3 & \cellcolor{gray!10}33.1 &\cellcolor{gray!20}49.2 \\
      \hdashline
      ULPT (r=2)
      & 4.1K   & 9.78 & 203.8 & \cellcolor{gray!10}39.7 & \cellcolor{gray!10}26.1 &\cellcolor{gray!20}32.9
      & 6.2K   & 17.17 & 589.2 & \cellcolor{gray!10}65.6 & \cellcolor{gray!10}35.9 &\cellcolor{gray!20}50.8 \\

      ULPT (r=64)
      & 4.7K   & 9.78 & 204.2  & \cellcolor{gray!10}42.4 & \cellcolor{gray!10}28.7 &\cellcolor{gray!20}\textbf{35.6}
      & 6.8K   & 17.17 & 589.6 &  \cellcolor{gray!10}65.4 & \cellcolor{gray!10}36.5 &\cellcolor{gray!20}\textbf{51.0} \\

      ULPT (r=256)
      & 6.7K   & 9.78 & 204.6 & \cellcolor{gray!10}41.4 & \cellcolor{gray!10}26.3 &\cellcolor{gray!20}33.9
      & 8.7K   & 17.17 & 589.6 & \cellcolor{gray!10}66.4 & \cellcolor{gray!10}34.6 &\cellcolor{gray!20}50.5 \\
      \bottomrule
    \end{tabular}%
  }
  \caption{Efficiency and performance of the Llama 3.2 models on GSM8K and MBPP. We report training parameters,  VRAM usages (GB), runtime (seconds/1K steps), GSM8K accuracy, and MBPP pass@1; In-context learning (ICL) serves as the lower-bound baseline.}
  \label{tab:ulpt-llama}
\end{table*}

% main results
\paragraph{Performance on GLUE and SuperGLUE.} 
As shown in Table~\ref{tab:glue-superglue}, ULPT achieves similar or higher performance on GLUE and SuperGLUE benchmark datasets compared with previous methods, while maintaining remarkable parameter efficiency. 

Profoundly, the extreme configuration of $r = 2$ retains at least $97\%$ performance of vanilla prompt tuning (PT) while saving 98\% of the parameters. This highlights the capability of ULPT and its advantage in large-scale LLM customization. To the best of our knowledge, we are the first to demonstrate effective prompt tuning in a two-dimensional parameter space.

When increasing the rank to $r=64$, ULPT outperforms that with $r=256$ and other state-of-the-art models. This suggests that ULPT effectively mitigates overfitting while remaining highly expressive. Specifically, the DPT model~\cite{xiao-etal-2023-decomposed} learns an up-projection matrix, underperforms ULPT despite using 7x more parameters at the same rank; even with the best setting $r=10$ suggested by the original paper~\cite{xiao-etal-2023-decomposed}, DPT is inferior to ULPT with $r = 64$ in both performance and efficiency.

ULPT also exhibits clear advantages in multi-task setups. A transfer learning method initializes a model by task mixtures and then adapts it to a specific task; therefore, it cannot save parameters. Previous studies of transfer learning include SPoT~\cite{vu-etal-2022-spot} and ATTEMPT~\cite{asai-etal-2022-attempt}. Our ULPT approach outperforms them in terms of accuracy and parameter efficiency, while offering a simpler training pipeline.
Multi-task learning, on the other hand, shares certain parameters across different tasks~\cite{mahabadi2021parameter,ivison-peters-2022-hyperdecoders,wang2023multitask}, and the parameter efficiency is measured on a per-task basis. Despite this, our ULPT still outperforms multi-task prompt tuning methods in both accuracy and per-task parameter efficiency.

\paragraph{Performance on MRQA and other classification tasks.} 
Following prior work on prompt tuning~\cite{lester-etal-2021-power,wang2023multitask,shi2024dept}, we evaluate ULPT on MRQA 2019 shared tasks as well as other language understanding tasks. Detailed results are provided in Appendix~\ref{appendix: mrqa results}. Consistent with our findings on GLUE and SuperGLUE, these results confirm that ULPT maintains competitive performance across diverse tasks while utilizing significantly fewer training parameters.

\subsection{Experiments on Reasoning Tasks}\label{sec: reasoning experiments}

\paragraph{Datasets and Baselines.}
Our previous experiments (Section~\ref{sec: Experiments on Language Understanding Tasks}) evaluate ULPT on four diverse task suites using the encoder–decoder T5 model, following prior prompt-tuning work~\cite{lester-etal-2021-power, wang2023multitask, shi2024dept}.

We further extend our experiments to two reasoning tasks, GSM8K~\cite{cobbe2021gsm8k} and MBPP~\cite{austin2021program}, using decoder-only Llama 3.2 models~\cite{grattafiori2024llama} at 1B and 3B scales, which have dimensionalities of $2048$, $3072$. 

We compare ULPT against in-context learning~\cite{NEURIPS2020_1457c0d6}, vanilla prompt tuning, IA3~\cite{liu2022fewshot}, LoRA and its recent ultra parameter-efficient variants: VeRA~\cite{kopiczko2024vera}, which approximates the LoRA update matrices via random projections, and FourierFT~\cite{gao2024parameterefficient}, which compresses weight updates by leveraging random spectral entries in the frequency domain. Implementation and training details are provided in Appendix~\ref{appendix: reasoning tasks}.

\paragraph{Performance on GSM8K and MBPP.}
As shown in Table~\ref{tab:ulpt-llama}, ULPT achieves the best trade-off between efficiency and performance. With only a few thousand trainable parameters, ULPT uses the least VRAM, achieves faster training runtime, and consistently delivers higher accuracy across both tasks and model scales. At $r = 64$, ULPT outperforms all baselines, including LoRA, VeRA, FourierFT, and vanilla prompt tuning. 

By contrast, IA3, LoRA and its variants require orders of magnitude more parameters to match or even underperform ULPT. Although FourierFT can match ULPT’s parameter scale at $n = 128$, it suffers from significantly lower accuracy, slower training speed, and higher memory usage. 

Finally, ULPT incurs no additional inference overhead. Since the learned prompt tokens are prepended once and cached during autoregressive generation, the decoding throughput remains virtually unchanged, detailed analysis can be found in Appendix~\ref{appendix: inference overhead}.

\subsection{In-Depth Analyses}\label{sec: analysis}

\begin{figure}[!t]
\centering
    \begin{minipage}[t]{0.23\textwidth}
        \centering
        \includegraphics[width=\textwidth]{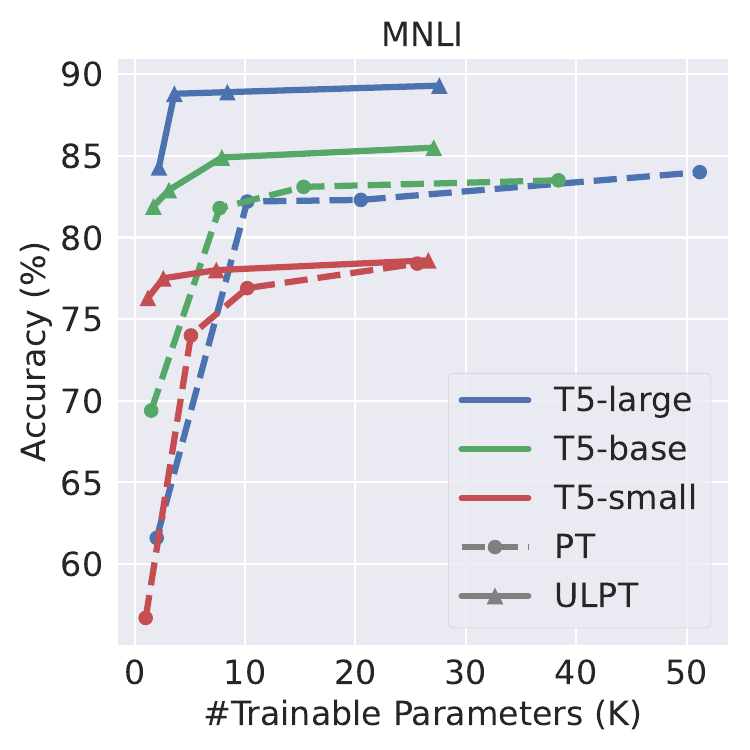}
    \end{minipage}
    \hfill
    \begin{minipage}[t]{0.23\textwidth}
        \centering
        \includegraphics[width=\textwidth]{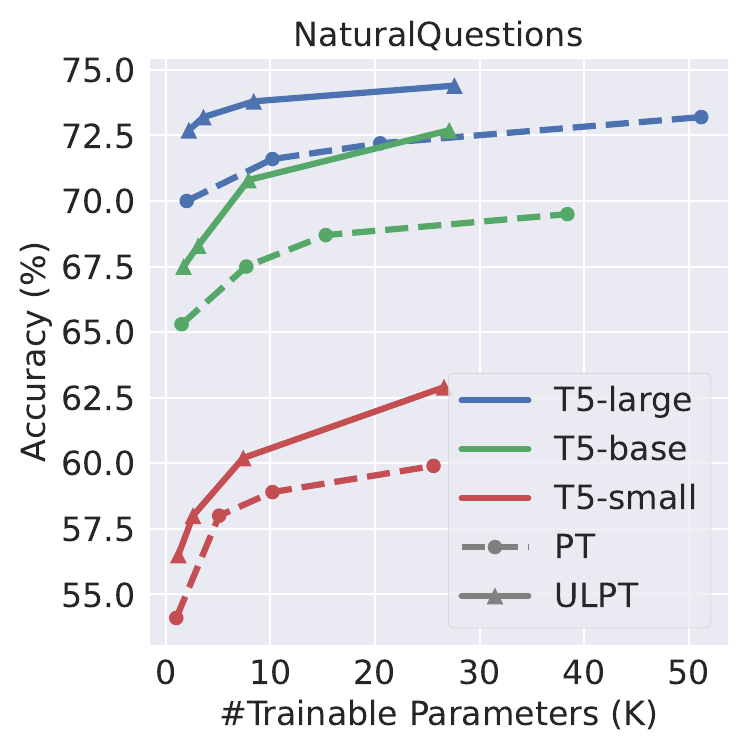}
    \end{minipage}
\caption{Results with controlled numbers of trainable parameters, suggesting that ULPT's longer prompt with lower dimensions offers more expressivity.}
\label{fig: control parameters}
\end{figure}

\begin{figure}[!t]
\centering
    \begin{minipage}[t]{0.23\textwidth}
        \centering
        \includegraphics[width=\textwidth]{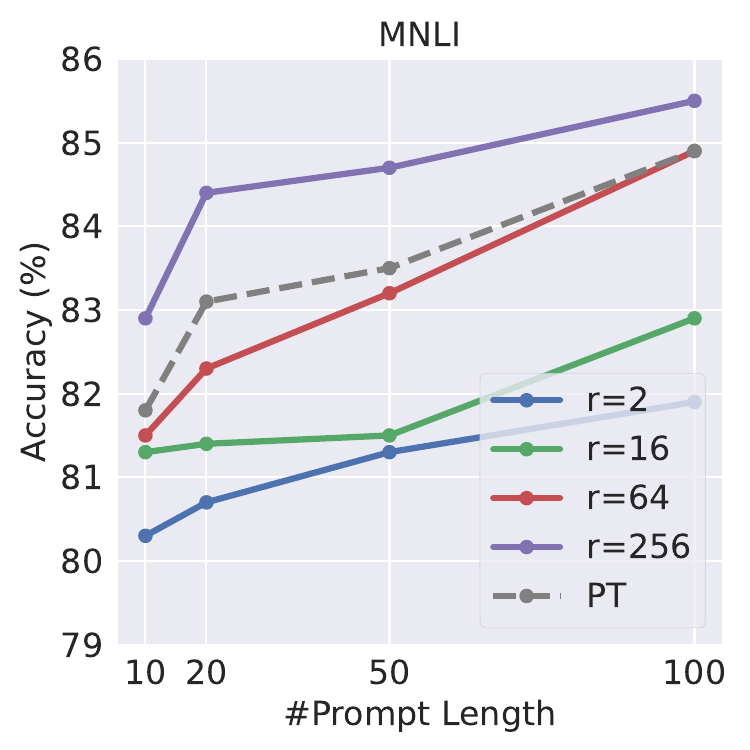}
    \end{minipage}
    \hfill
    \begin{minipage}[t]{0.23\textwidth}
        \centering
        \includegraphics[width=\textwidth]{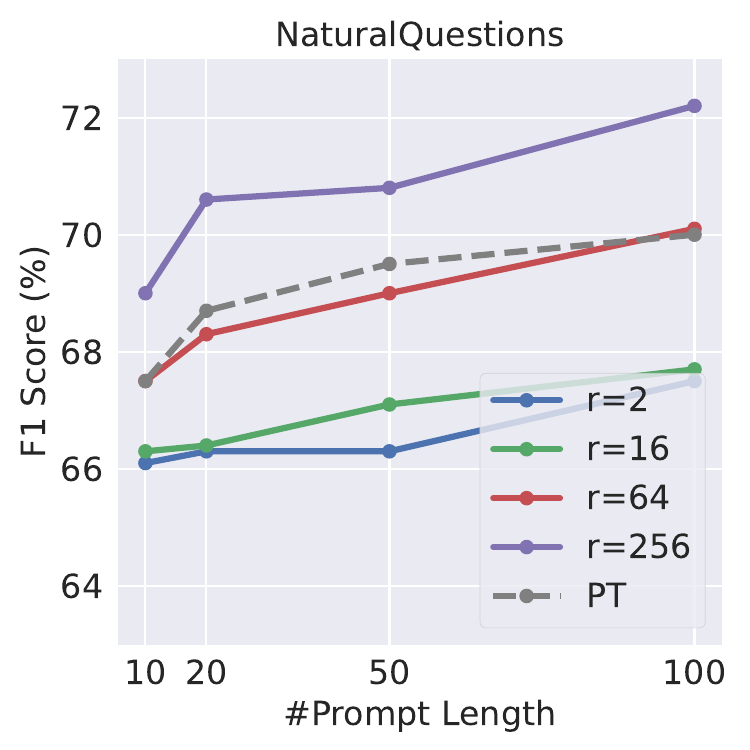}
    \end{minipage}
\caption{{Performance when the number of prompt tokens for both ULPT and na\"ive PT varies from $10$ to $100$.}}
\label{fig: control ULPT length}
\end{figure}

\paragraph{Dimension–length trade-off drives expressivity.} ULPT enables a trade-off between prompt length and dimension under a fixed parameter budget. To investigate this, we compare ULPT with vanilla prompt tuning when the learnable parameters are controlled. For ULPT, we fix the prompt token number at $100$ and vary the rank from $2$ to $256$; for vanilla full-dimensional prompt tuning, we vary the token number from $2$ to $50$. This analysis is conducted with two large datasets MNLI and Natural Questions across three model sizes: T5-small, T5-base, and T5-large.

Figure~\ref{fig: control parameters} illustrates the results, showing that our low-dimensional ULPT with more tokens (solid lines) always outperforms vanilla full-dimensional prompt tuning with fewer tokens (dashed lines). The analysis suggests that, when the number of learnable parameters is controlled, a longer prompt with a lower dimension offers more expressivity due to the additional Transformer steps. We provide additional results in Appendix~\ref{appendix: bloomz}, which confirms that this also holds for decoder-only models.

\paragraph{Longer prompts improve performance.} Recall that Table~\ref{tab:glue-superglue} has analyzed our ULPT performance with different ranks. We now vary the number of prompt tokens from $10$ to $100$ and plot the trend in Figure~\ref{fig: control ULPT length}. We see that our ULPT exhibits a similar trend as vanilla prompting: performance increases with a longer prompt. With an appropriate rank configuration, our ULPT consistently outperforms vanilla prompt tuning under different lengths.

\begin{figure}[!t]
\centering
    \begin{minipage}[t]{0.23\textwidth}
        \centering
        \includegraphics[width=\textwidth]{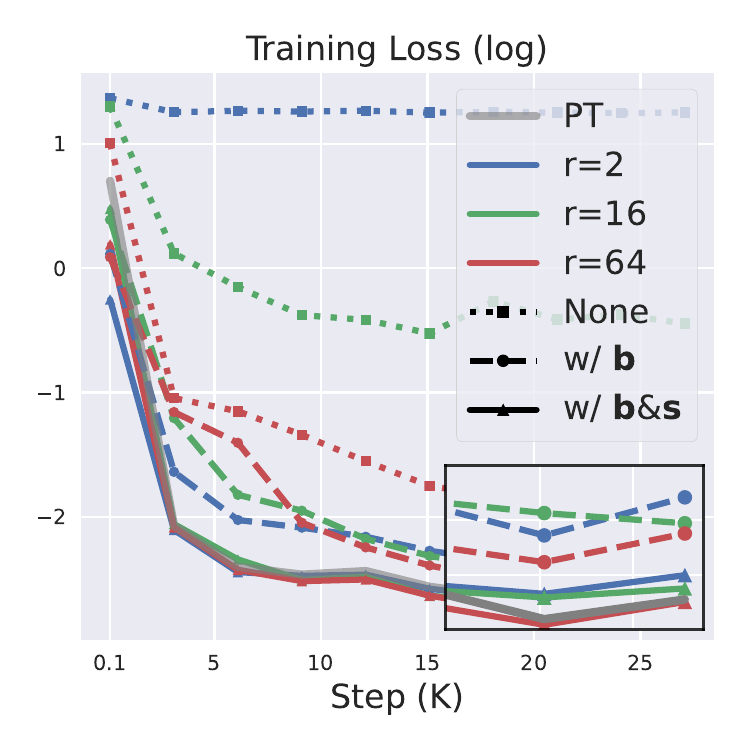}
    \end{minipage}
    \hfill
    \begin{minipage}[t]{0.23\textwidth}
        \centering
        \includegraphics[width=\textwidth]{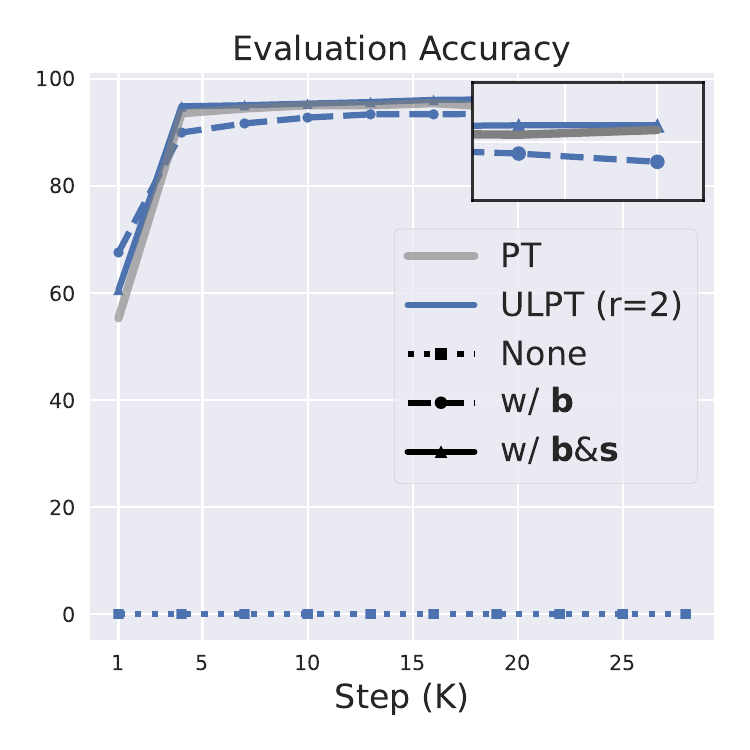}
    \end{minipage}
\caption{\textbf{Left:} Training loss curves comparing ULPT with no alignment (dotted), with learnable shift only (dashed), and with both shift and scale (solid). \textbf{Right:} Evaluation accuracy curves for ULPT at $r=2$. Adding shift significantly improves optimization and accuracy, while adding scale yields further gains. Trends are consistent across ranks.}
\label{fig: pilot sst2}
\end{figure}

\begin{figure}[!t]
\centering
\includegraphics[width=0.48\textwidth]{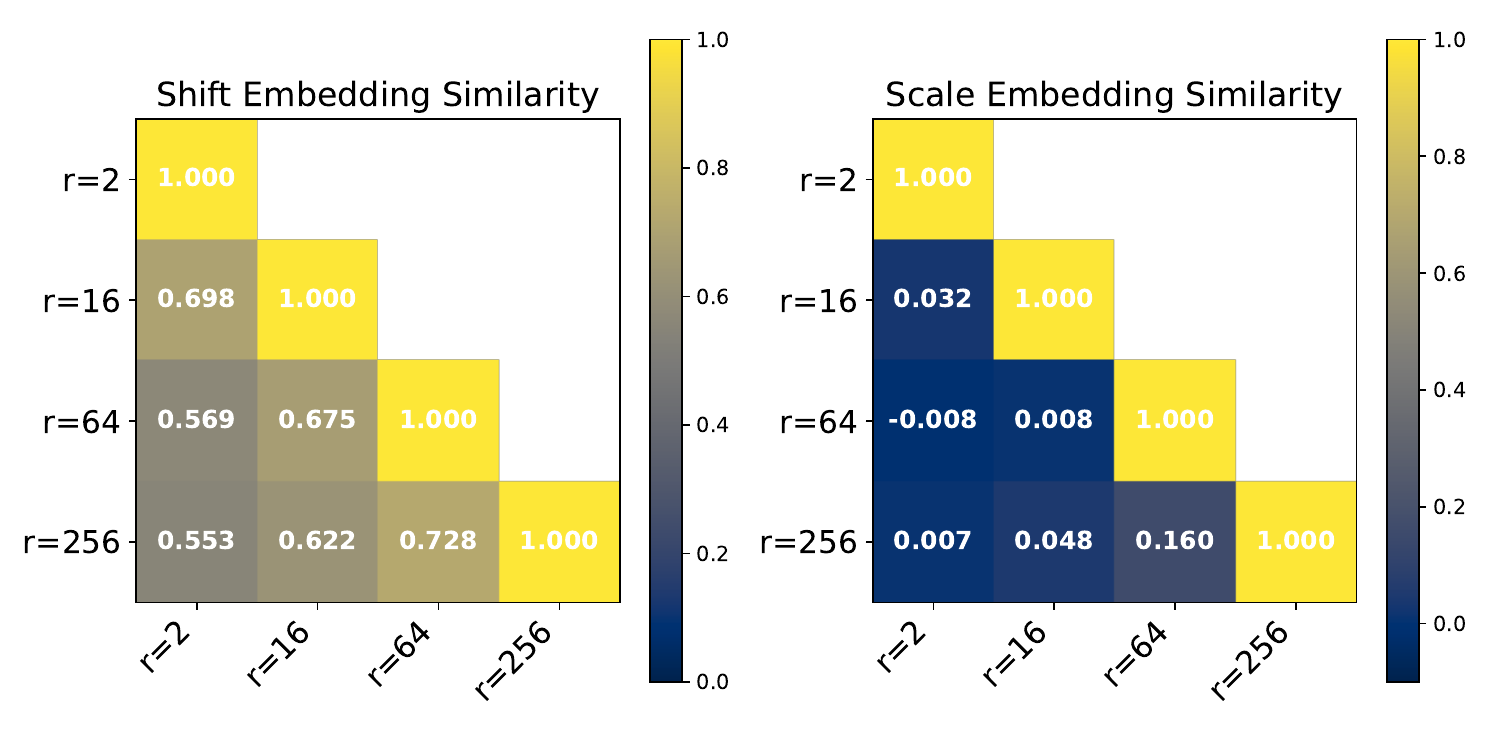}
\caption{\textbf{Left:} Shift embeddings learned with different ranks are highly similar, suggesting a general alignment role.
\textbf{Right:} Scale embeddings vary significantly, indicating their dependence on frozen random projections.}
\label{fig: similarity matrix}
\end{figure}

\paragraph{Ablation on shift and scale embeddings.}
We conduct an ablation study on the learnable \textit{shift} embedding $\bm{b} \in \mathbb{R}^d$ and \textit{scale} embedding $\bm{s} \in \mathbb{R}^d$, using the SST-2 dataset with the T5-base model as the testbed, where we set the token number to be $n=100$. The results are shown in Figure~\ref{fig: pilot sst2}. As seen, the dotted lines correspond to removing both shift and scale embeddings; their training loss remains high, suggesting that na\"ively freezing the projection matrix $\Tilde{\mP}$ hinders the optimization process and consequently lowers the model performance. Introducing a learnable shift embedding $\bm{b}$  provides a substantial improvement (dashed lines), particularly in the low-dimensional configuration of $r=2$. A learnable scale embedding \textit{scale} $\bm{s}$ further improves the training process and performance (solid lines). The ablation study shows that, although shift and scale embeddings are additional $2d$-many parameters, they play an important role in ultra-low-dimensional prompt tuning.

To further investigate the behavior of these embeddings, we analyze the pairwise cosine similarities of the shift $\bm{b}$ and scale $\bm{s}$ vectors under different rank configurations, visualized in Figure~\ref{fig: similarity matrix}. Interestingly, the learned shift embeddings show consistently high similarity scores with different rank configurations, indicating their primary role as a stable alignment mechanism after up-projection. By contrast, the scale embeddings show near-zero similarity, as they depend on the sampled (and frozen) random projection matrix $\tilde{\bm P}$.

\paragraph{Comparison with an alternative method of tuning projection matrix.}

\begin{table}[!t]
\centering
\resizebox{0.48\textwidth}{!}{%
\begin{tabular}{lccllll}
\toprule
& \multicolumn{2}{c}{\textbf{Train?}} & \multicolumn{4}{c}{\textbf{\#Parameters}} \\
\cmidrule(l){2-3} \cmidrule(l){4-7}
\textbf{Dataset} & $\mZ$ & $\mP$ & 1.7K & 3.1K & 7.9K & 27.1K \\
\midrule
\multirow{2}{*}{MNLI}&  & \checkmark & NF & 82.9 (2) & 84.5 (8) & 85.3 (33) \\
& \cellcolor{gray!10}\checkmark & \cellcolor{gray!10}& \cellcolor{gray!10}$\textbf{81.9}_{0.1}$ (2) & \cellcolor{gray!10}$\textbf{83.0}_{0.2}$ (16) & \cellcolor{gray!10}$\textbf{84.9}_{0.1}$ (64) & \cellcolor{gray!10}$\textbf{85.4}_{0.1}$ (256) \\
\midrule
\multirow{2}{*}{NQ} &  & \checkmark & NF & 66.9 (2) & 70.0 (8) & 72.0 (33) \\
& \cellcolor{gray!10}\checkmark & \cellcolor{gray!10} & \cellcolor{gray!10}$\textbf{67.2}_{0.2}$ (2) & \cellcolor{gray!10}$\textbf{68.0}_{0.4}$ (16) & \cellcolor{gray!10}$\textbf{70.7}_{0.3}$ (64) & \cellcolor{gray!10}$\textbf{72.6}_{0.2}$ (256) \\
\bottomrule
\end{tabular}}
\caption{Training either low-dimensional embeddings $\mZ$ (ours) or the up-projection $\mP$ with T5-base. Numbers in the brackets refer to the rank $r$ given the controlled number of parameters. ``NF'' refers to non-feasible.}
\label{tab: tune P or Qresults}
\end{table}

We follow the setup in Section~\ref{sec: analysis} to ablate which component to tune in the low-rank decomposition. The low-rank decomposition $\mE=\mZ \mP$ allows an alternative approach that freezes $\mZ$ and tunes $\mP$, which contrasts with our approach that freezes $\mP$ and tunes $\mZ$. The comparison is shown in Table~\ref{tab: tune P or Qresults}.
The alternative setup (tuning $\mP$) can be viewed as learning an up-projection from a set of random but frozen low-dimensional vectors. However, a key drawback of making $\mP$ trainable is the rapid growth in the number of parameters when the rank $r$ increases, since $d \gg n$ in most practical scenarios. To ensure a fair comparison, we control the number of parameters by varying the rank $r$ for both methods.

As seen, tuning $\mP$ fails to be feasible in the 1.7K-parameter setup. Even if we set $r=2$, tuning $\mP$ results in 3.1K parameters, equivalent to our $r=16$ setup.
With a larger budget, tuning $\mP$ achieves slightly worse performance than our ULPT which tunes $\mZ$. This analysis verifies the expressiveness of random projections; it also shows that our ULPT is superior to the alternative approach.

\section{Conclusion}
We introduce ULPT, a novel parameter-efficient method that learns task-specific prompts in a low-dimensional space and projects them into the model space using a frozen random matrix with learned shift and scale vectors. ULPT achieves competitive or superior performance compared with state-of-the-art parameter-efficient fine-tuning methods while using dramatically fewer trainable parameters.

\section{Limitations}
We have comprehensively evaluated ULPT on both encoder–decoder T5 models and decoder-only Bloomz and Llama models on various NLP and reasoning datasets, demonstrating its parameter efficiency and competitive performance. However, due to the limitation in computational resource, we have not explored its effectiveness on very large language models (ranging from tens to hundreds of billions of parameters). We anticipate that, for such large LLMs, parameter-efficient methods like ULPT are particularly suitable for lightweight customization, such as adapting generation style or output formatting~\cite{liu-etal-2024-customizing-large}, rather than unlocking new capabilities which are better addressed in pre-training. This highlights the advantage of ULPT in scenarios that require ultra-low parameter usage without compromising effectiveness.

\section*{Acknowledgments}
We thank the reviewers and area chairs for their efforts. The research is supported in part by the Natural Sciences and Engineering Research Council of Canada (NSERC), the Amii Fellow Program, the Canada CIFAR AI Chair Program, a donation from DeepMind, and the Digital Research Alliance of Canada (alliancecan.ca).

\bibliography{custom}
\clearpage
\appendix

\section{Experimental Details}
\subsection{Datasets Statistics for Language Understanding Tasks}\label{appendix: details in datasets}

\begin{table*}[!t]
\centering
\resizebox{\textwidth}{!}{%
\begin{tabular}{lrrrrrll}
\toprule
\textbf{Dataset} & \textbf{Source Length} & \textbf{Target Length} & \textbf{\#Train} & \textbf{\#Valid} & \textbf{\#Test} & \textbf{Type} & \textbf{Size}\\ 
\midrule
\multicolumn{7}{c}{\textbf{GLUE Benchmark}} \\
\midrule
MNLI   & 31.8  & 1.0 & 392,702 & 9,832  & 9,815  & Natural language inference     & Large          \\
QQP    & 24.1  & 1.0 & 362,846 & 1,000  & 40,431 & Paraphrasing   & Large     \\
QNLI   & 38.4  & 1.0 & 103,743 & 1,000  & 5,463  & Natural language inference     & Large        \\
SST-2  & 10.4  & 1.0 & 66,349  & 1,000  & 872    & Sentiment analysis   & Medium      \\
STS-B  & 21.9  & 1.0 & 5,749   & 750    & 750    & Sentence similarity & Small \\
MRPC   & 45.9  & 1.0 & 3,668   & 204    & 204    & Paraphrasing   &Small     \\
RTE    & 54.4  & 1.0 & 2,490   & 138    & 139    & Natural language inference       &Small        \\
CoLA   & 8.7   & 1.0 & 8,551   & 521    & 522    & Acceptability  &Small    \\
\midrule
\multicolumn{7}{c}{\textbf{SuperGLUE Benchmark}} \\
\midrule
MultiRC & 286.1 & 1.0 & 27,243  & 2,424  & 2,424  & Question answering & Medium   \\
BoolQ   & 108.3 & 1.0 & 9,427   & 1,635  & 1,635  & Question answering  & Small  \\
WiC     & 18.4  & 1.0 & 5,428   & 319    & 319    & Word sense disambiguation & Small \\
WSC     & 28.1  & 1.0 & 554     & 52     & 52     & Commonsense reasoning  & Small\\
CB      & 64.6  & 1.0 & 250     & 28     & 28     & Natural language inference  & Small             \\
% ReCoRD  & 210.7 & 1.5 & 137,484 & 1,370  & 15,176 & Common Sense Reason.  \\
\midrule
\multicolumn{7}{c}{\textbf{MRQA 2019 Shared Task}} \\
\midrule
NaturalQuestions & 242.7 & 4.5 & 103,071 & 1,000  & 12,836 & Question answering & Large\\
HotpotQA         & 225.7 & 2.6 & 71,928  & 1,000  & 5,901  & Question answering & Medium\\
SearchQA         & 942.8 & 2.0 & 116,384 & 1,000  & 16,980 & Question answering & Large\\
NewsQA           & 615.5 & 5.1 & 73,160  & 1,000  & 4,212  & Question answering & Medium\\
\midrule
\multicolumn{7}{c}{\textbf{Other Datasets}} \\
\midrule
WinoGrande      & 23.8  & 1.0 & 39,398  & 1,000  & 1,267  & Commonsense reasoning & Medium\\
YelpPolarity    & 134.0 & 1.0 & 100,000 & 1,000  & 38,000 & Sentiment analysis  & Large       \\
SciTail         & 30.8  & 1.0 & 23,596  & 652    & 652    & Natural language inference & Medium \\
PAWS            & 44.7  & 1.0 & 49,401   & 8,000  & 8,000  & Sentence Similarity  & Medium  \\
\bottomrule
\end{tabular}}
\caption{Dataset information and statistics for the main experiments in Section~\ref{sec: Experiments on Language Understanding Tasks}.}
\label{tab: datasets details}
\end{table*}

We present detailed information for the 21 natural language understanding tasks in Table~\ref{tab: datasets details}. Following previous work~\citep{wang2023multitask, shi2024dept}, we preprocess the labels for classification and multiple-choice tasks into a single-token label (e.g., 0, 1, 2, …) to simplify evaluation. For MRQA, the model generates an answer containing a sequence of tokens.

Based on the training set size, the tasks can be roughly categorized into three scales: small ($<$10K samples), medium (10--100K samples), and large ($>$100K samples). Notably, SuperGLUE contains small training sets, and is generally considered more challenging than GLUE, making it more susceptible to overfitting due to its limited samples. By contrast, MRQA and the tasks in the ``Others'' category consist of more complex tasks, likely requiring more parameters to capture their difficulty.

\subsection{Training Details for Language Understanding Tasks}\label{appendix: implementation for main experiments}
In our experiment for $21$ language understanding tasks (Section~\ref{sec: Experiments on Language Understanding Tasks}), we use a batch size of $16$ and a default learning rate of $6e{-1}$ with AdamW. The learning rate follows a linear schedule, warming up for $500$ steps and then decaying linearly to $0$. We set a maximum sequence length of $256$ for most tasks, except for SuperGLUE-MultiRC being $348$ and MRQA being $512$. ULPT is trained on all tasks for up to $100,000$ steps. Performance is evaluated every $1,000$ steps, with the best checkpoint selected based on the validation set.

\subsection{Implementation and Training Details for Reasoning Tasks}\label{appendix: reasoning tasks}
In Section~\ref{sec: reasoning experiments}, we evaluate ULPT on Llama 3.2~\cite{grattafiori2024llama}, a decoder-only model in 1B ($d=2048$) and 3B ($d=3072$) scales. We consider two reasoning tasks, GSM8K~\cite{cobbe2021gsm8k} for math reasoning and MBPP~\cite{austin2021program} for code generation. 

As baselines, we include in-context learning (4-shot demostrations sampled from each training set), LoRA and VeRA adapters with ranks $r\in\{1,4,8\}$, FourierFT with three frequency configurations $n\in\{128,512,1024\}$. Vanilla prompt tuning and ULPT both run with $10$ learnable tokens, since extending the length yields no additional benefit on Llama. We evaluate ULPT on three rank configurations of 2, 64 and 256. All methods share a batch size of 4, with training for 3 epochs on GSM8K and 10 epochs on MBPP. We set learning rates to 1e-3 for LoRA and VeRA, and 1e-2 for ULPT, prompt tuning and FourierFT.
At inference, we apply greedy decoding with the maximum generation length of $1024$ tokens.

\begin{table*}[!t]
\centering
\resizebox{0.88\textwidth}{!}{%
\begin{tabular}{lccccccccccc}
\toprule
 &  & \multicolumn{5}{c}{\textbf{MRQA}} & \multicolumn{5}{c}{\textbf{Others}} \\
\cmidrule(l){3-7} \cmidrule(l){8-12}
\textbf{Method} & \textbf{\#Param} & \textbf{NQ} & \textbf{HQA} & \textbf{SQA} & \textbf{NewsQA} & \cellcolor{gray!20}\textbf{Avg.} & \textbf{WG} & \textbf{Yelp} & \textbf{SciTail} & \textbf{PAWS} & \cellcolor{gray!20}\textbf{Avg.} \\
\midrule
Fine-tuning & 220M & 75.1 & 77.5 & 81.1 & 65.2 & \cellcolor{gray!20}\textbf{74.7} & 61.9 & 96.7 & 95.8 & 94.1 & \cellcolor{gray!20}\textbf{87.1} \\
Adapter & 1.9M & 74.2 & 77.6 & 81.4 & 65.6 & \cellcolor{gray!20}74.7 & 59.2 & 96.9 & 94.5 & 94.3 & \cellcolor{gray!20}86.2 \\
BitFit & 280K & 70.7 & 75.5 & 77.7 & 64.1 & \cellcolor{gray!20}72.0 & 57.2 & 94.7 & 94.7 & 92.0 & \cellcolor{gray!20}84.7 \\
LoRA & 3.8M & 72.4 & 62.3 & 72.5 & 56.9 & \cellcolor{gray!20}66.0 & 58.2 & 97.1 & 94.7 & 94.0 & \cellcolor{gray!20}86.0 \\
\hdashline
SPoT & 76.8K & 68.2 & 74.8 & 75.3 & 58.2 & \cellcolor{gray!20}69.1 & 50.4 & 95.4 & 91.2 & 91.1 & \cellcolor{gray!20}82.0 \\
ATTEMPT & 232K & 70.4 & 75.2 & 78.5 & 62.8 & \cellcolor{gray!20}71.4 & 57.6 & 96.7 & 93.1 & 92.1 & \cellcolor{gray!20}84.9 \\
PT\textsuperscript{\dag} & 76.8K & 70.0 & 74.7 & 75.3 & 63.0 & \cellcolor{gray!20}70.8 & 49.6 & 95.6 & 92.0  & 57.9 & \cellcolor{gray!20}73.8 \\
DPT\textsuperscript{\dag}(r=10) & 9.0K & 71.3 & 75.5 & 76.3 & 63.5 & \cellcolor{gray!20}71.7 & 49.6 & 96.1 & 95.6 & 92.2 & \cellcolor{gray!20}83.4 \\
DPT (r=256) & 222K & 71.4 & 76.0 & 77.6 & 64.2 & \cellcolor{gray!20}72.3 & 49.6 & 96.3 & 95.2 & 55.8 & \cellcolor{gray!20}74.2 \\
DePT & 76.8K & 73.2\textsubscript{0.3} & 76.0\textsubscript{0.2} & 77.6\textsubscript{0.2} & 64.4\textsubscript{0.1} & \cellcolor{gray!20}73.0 & 59.0\textsubscript{0.2} & 96.8\textsubscript{0.1} & 95.6\textsubscript{0.2} & 93.7\textsubscript{0.1} & \cellcolor{gray!20}86.3 \\
MPT & 77.6K & 72.0\textsubscript{0.1} & 75.8\textsubscript{0.1} & 77.2\textsubscript{0.1} & 63.7\textsubscript{0.1} & \cellcolor{gray!20}72.2 & 56.5\textsubscript{0.9} & 96.4\textsubscript{0.0} & 95.5\textsubscript{0.3} & 93.5\textsubscript{0.1} & \cellcolor{gray!20}85.5 \\
\hdashline
ULPT (r=2) & 1.7K & 67.2\textsubscript{0.2} & 74.0\textsubscript{0.1} & 71.7\textsubscript{0.2} & 61.4\textsubscript{0.1} & \cellcolor{gray!20}68.6 & 49.5\textsubscript{0.2} & 95.6\textsubscript{0.1} & 93.0\textsubscript{0.9} & 90.4\textsubscript{0.2} & \cellcolor{gray!20}82.1\\
ULPT (r=16) & 3.1K & 68.0\textsubscript{0.3} & 74.3\textsubscript{0.0} & 72.9\textsubscript{0.1} & 61.3\textsubscript{0.5} & \cellcolor{gray!20}69.1 & 52.3\textsubscript{0.9} & 95.6\textsubscript{0.2} & 93.1\textsubscript{0.7} & 90.5\textsubscript{0.3} & \cellcolor{gray!20}82.9 \\
ULPT (r=64) & 7.9K & 70.7\textsubscript{0.3} & 75.3\textsubscript{0.1} & 75.3\textsubscript{0.1} & 62.9\textsubscript{0.5} & \cellcolor{gray!20}71.1 & 56.6\textsubscript{0.9} & 96.2\textsubscript{0.1} & 94.4\textsubscript{0.9} & 91.7\textsubscript{0.4} & \cellcolor{gray!20}84.7 \\
ULPT (r=256) & 27.1K & 72.6\textsubscript{0.2} & 76.5\textsubscript{0.1} & 77.9\textsubscript{0.1} & 64.2\textsubscript{0.2} & \cellcolor{gray!20}72.8 & 57.6\textsubscript{0.8} & 96.6\textsubscript{0.2} & 96.2\textsubscript{0.1} & 93.0\textsubscript{0.1} & \cellcolor{gray!20}85.9 \\
\bottomrule
\end{tabular}%
}
\caption{Performance on MRQA and other benchmarks using the T5-base model. The standard metrics reported are the F1 score for MRQA tasks and accuracy for other datasets. ${}^\dagger$Results are obtained based on our replication using default configurations. Other baseline results are sourced from \citet{shi2024dept}.}
\label{tab:mrqa and others}
\end{table*}

\section{Additional Results}
\subsection{Distribution of Pretrained Embeddings}\label{appendix: Distribution of Pretrained Embeddings}
\begin{figure}
    \centering
    \includegraphics[width=0.95\linewidth]{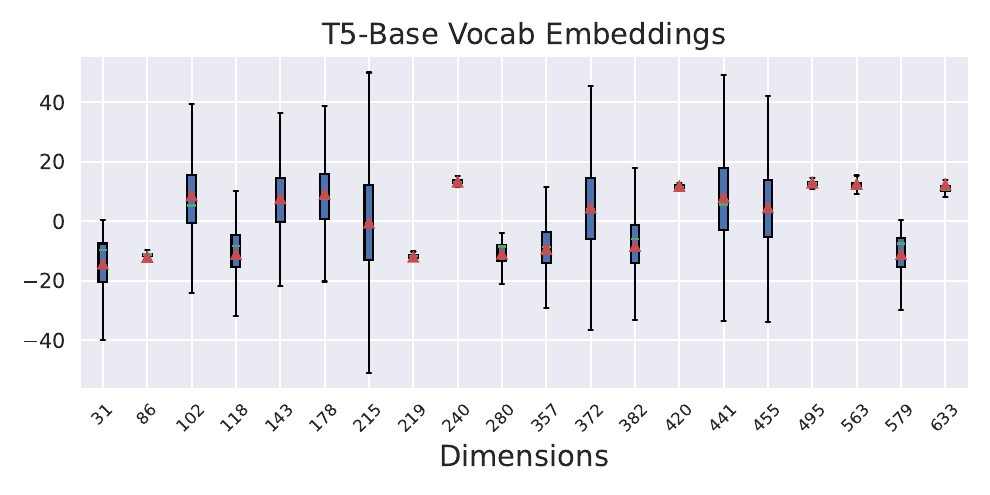}
    \caption{Distribution of randomly selected dimensions from the pretrained token embeddings of T5-base.}
    \label{fig:vocab embeddings distribution}
\end{figure}

Following our pilot study in Section~\ref{section: ulpt}, we present the distribution of the same randomly selected dimensions from the pretrained token embeddings of the T5-base model. We observe that the variation in these pretrained embeddings is smaller compared with the learned soft embeddings from the CoLA and SST-2 tasks.

\subsection{Additional Results on T5 Model}\label{appendix: mrqa results}
We evaluate ULPT on the MRQA dataset and four additional tasks in the ``Others'' category. Following the standard practice on these benchmarks~\cite{wang2023multitask, shi2024dept}, we run ULPT three times with different seeds and report the mean and standard deviation. This setup also allows us to assess ULPT’s sensitivity to the random initialization of the up-projection matrix.

The results are shown in Table~\ref{tab:mrqa and others}. Unlike GLUE and SuperGLUE performance, ULPT exhibits consistent improvement when the rank is higher. This is probably because these tasks are more challenging for the T5-base model, which aligns with the observation that full-model fine-tuning outperforms parameter-efficient methods on these tasks. Nevertheless, ULPT achieves competitive performance (slightly worse than the best-performing DePT approach), while saving parameters by multiple folds.

\begin{table*}[!t]
\centering
\resizebox{\textwidth}{!}{%
\begin{tabular}{lccccccccccccccccc}
\toprule
\textbf{Model} & \textbf{Method} & SST-2 & HQA & WG & \cellcolor{gray!20}\textbf{Avg.} & SST-2 & HQA & WG & \cellcolor{gray!20}\textbf{Avg.} & SST-2 & HQA & WG & \cellcolor{gray!20}\textbf{Avg.} & SST-2 & HQA & WG & \cellcolor{gray!20}\textbf{Avg.} \\
\midrule
 &  & \multicolumn{4}{c}{\#Param=2K, ULPT (r=2)} & \multicolumn{4}{c}{\#Param=4K, ULPT (r=16)} & \multicolumn{4}{c}{\#Param=8K, ULPT (r=64)} & \multicolumn{4}{c}{\#Param=28K, ULPT (r=256)} \\
\cmidrule(l){3-6} \cmidrule(l){7-10} \cmidrule(l){11-14} \cmidrule(l){15-18}
\multirow{2}{*}{Bloomz-560M} & PT & 89.8 & 42.9 & 48.6  & \cellcolor{gray!20}60.4 & 91.1 & 53.0 & 52.0 & \cellcolor{gray!20}65.4 & 91.9 & 57.2 & 50.0 & \cellcolor{gray!20}66.4 & 92.2 & 60.6 & 52.2  & \cellcolor{gray!20}68.3 \\
& ULPT & 90.2 & 52.4 & 51.7 & \cellcolor{gray!20}\textbf{64.8} & 92.2 & 55.7 & 53.1 & \cellcolor{gray!20}\textbf{67.0} & 91.8 & 59.3 & 53.1 & \cellcolor{gray!20}\textbf{68.1} & 92.6 & 62.5 & 52.2 & \cellcolor{gray!20}\textbf{69.1} \\
\midrule
 &  & \multicolumn{4}{c}{\#Param=4K, ULPT (r=2)} & \multicolumn{4}{c}{\#Param=6K, ULPT (r=16)} & \multicolumn{4}{c}{\#Param=10K, ULPT (r=64)} & \multicolumn{4}{c}{\#Param=30K, ULPT (r=256)} \\
\cmidrule(l){3-6} \cmidrule(l){7-10} \cmidrule(l){11-14} \cmidrule(l){15-18}
\multirow{2}{*}{Bloomz-1.7B} & PT & 93.2 & 64.6 & 50.1 & \cellcolor{gray!20}69.3  & 93.5 & 66.1 & 51.5 & \cellcolor{gray!20}70.4  & 94.0 & 67.3 & 55.3 & \cellcolor{gray!20}72.2 & 94.7 & 69.1 & 55.4  & \cellcolor{gray!20}73.1 \\
& ULPT & 94.4 & 65.6 & 54.6 & \cellcolor{gray!20}\textbf{71.5} & 93.9 & 66.3 & 55.6 & \cellcolor{gray!20}\textbf{71.9} & 94.3 & 68.0 & 55.2 & \cellcolor{gray!20}\textbf{72.5} & 95.1 & 69.3 & 57.4  & \cellcolor{gray!20}\textbf{73.9} \\
\midrule
 &  & \multicolumn{4}{c}{\#Param=5K, ULPT (r=2)} & \multicolumn{4}{c}{\#Param=8K, ULPT (r=16)} & \multicolumn{4}{c}{\#Param=13K, ULPT (r=64)} & \multicolumn{4}{c}{\#Param=31K, ULPT (r=256)} \\
\cmidrule(l){3-6} \cmidrule(l){7-10} \cmidrule(l){11-14} \cmidrule(l){15-18}
\multirow{2}{*}{Bloomz-3B} & PT & 93.2 & 66.1 & 50.5 & \cellcolor{gray!20}69.9 &  94.5 & 69.0 &56.0 & \cellcolor{gray!20}73.2 & 94.9 & 69.1 & 58.9 & \cellcolor{gray!20}74.3 & 94.9 & 71.5 & 60.0 & \cellcolor{gray!20}75.5 \\
& ULPT & 94.0 & 68.1 & 53.5 & \cellcolor{gray!20}\textbf{71.9}  & 94.4 & 68.9 & 57.1  &\cellcolor{gray!20} \textbf{73.5}  & 94.7 & 70.9  & 58.5 & \cellcolor{gray!20} \textbf{74.7} & 95.0 & 71.8 & 60.7 & \cellcolor{gray!20} \textbf{75.8}\\
\bottomrule
\end{tabular}}
\caption{Results on Bloomz, a decoder model with varying sizes (560M, 1.7B, and 3B) and hidden dimensions (1024, 2048, and 2560). We compare ULPT with prompt tuning by conditioning on the same number of trainable parameters.}
\label{tab: detailed decoder}
\end{table*}

\subsection{Additional Analysis on Bloomz Family}\label{appendix: bloomz}

We further test our hypothesis from Section~\ref{sec: analysis}: under a fixed parameter budget, longer low-dimensional prompts offer greater expressivity. To this end, we extend ULPT evaluation to Bloomz~\cite{muennighoff-etal-2023-crosslingual}, a decoder-only model available in 560M, 1.7B, and 3B sizes, having hidden dimensions of 1024, 2048, and 2560 respectively. 

For evaluation diversity, we select three mid-sized tasks from each task group: SST-2, HotpotQA, and Winogrande, providing assessment across classification, multi-hop reasoning, and commonsense reasoning. Given that Bloomz models are larger than the T5 series, we train up to 30K steps with a batch size of $4$, while keeping other hyperparamters the same as our main experiment.

We consider comparing ULPT with prompt tuning under different parameter budgets for text generation. Specifically, we vary the rank of ULPT from $2$ to $256$ while fixing the length $n=100$. For full-dimensional prompt tuning, the token number is adjusted to match the parameter count, resulting in fewer virtual tokens than ULPT. 

The results in Table~\ref{tab: detailed decoder} show that ULPT consistently outperforms full-dimensional prompt tuning across all model sizes and tasks given a fixed parameter budget. These results support our earlier analysis (Section~\ref{sec: analysis}) that allocating the parameter budget toward longer sequence lengths rather than higher embedding dimensions yields greater expressivity.

\subsection{Inference Overhead Analysis}\label{appendix: inference overhead}

\begin{table}[t]
\centering
\resizebox{0.5\textwidth}{!}{%
\begin{tabular}{lcc}
\toprule
\textbf{Runtime Setting}                  & \textbf{Llama 1B }     & \textbf{Llama 3B }      \\
\midrule
Vanilla PT embeddings loading  & $0.64 \pm 0.04$    & $0.91 \pm 0.04$    \\
\hdashline                                
ULPT up-projection ($r=2$)       & $0.56 \pm 0.06$    & $0.59 \pm 0.04$    \\
ULPT up-projection ($r=64$)      & $1.43 \pm 0.09$    & $1.87 \pm 0.06$    \\
ULPT up-projection ($r=256$)     & $4.09 \pm 0.10$    & $5.80 \pm 0.16$    \\
\hdashline
Decoding                         & $1481.15 \pm 64.26$ & $2536.67 \pm 42.14$ \\
\bottomrule
\end{tabular}}
\caption{Runtime (ms) for ULPT prompt reconstruction.}
\label{tab:runtime}
\end{table}

\begin{table}[t]
\centering
\resizebox{0.38\textwidth}{!}{%
\begin{tabular}{lcc}
\toprule
\textbf{Model}       & \textbf{No adaptation}    & \textbf{ULPT}   \\
\midrule
Llama 1B    & $82.76 \pm 0.33$    & $82.71 \pm 0.33$   \\
Llama 3B    & $48.74 \pm 0.25$    & $48.70 \pm 0.22$   \\
\bottomrule
\end{tabular}}
\caption{Decoding speed (tokens/second)}
\label{tab:decode-speed}
\end{table}

ULPT's ideal use case is for massive LLM customization, where adapters are frequently loaded to a model. Unlike vanilla prompt tuning, ULPT reconstructs prompt embeddings from a low-dimensional space at inference time.
We empirically compare its up-projection runtime against vanilla PT over 100 runs, shown in Table~\ref{tab:runtime}. We find the overhead is negligible relative to the decoding time on GSM8K .

ULPT follows the paradigm of prompt tuning to prepend tokens. We also measure end-to-end decoding speed using ULPT (rank 2 and 100 prompt tokens) on a single NVIDIA A6000 48G GPU with the Huggingface generation API. As seen in Table~\ref{tab:decode-speed}, the decoding speed is similar with or without the additional prompt tokens.

\onecolumn
\allowdisplaybreaks
\section{Theoretical Analysis}\label{apx:proof}

\subsection{Proof of Theorem~\ref{thm:existence}}\label{apx:proof:expr}
\expr*
\begin{proof}

Setting $\vz_i = \mP \ve_i$, we have
\begin{align}
        \Pr\left( \left| \frac{ \| \vz_i - \vz_j \| - \|\ve_i - \ve_j\|}{\|\ve_i - \ve_j\|} \right| \ge \epsilon \right)
        =& \Pr\left( \left| \frac{ \| \mP \left( \ve_i - \ve_j \right) \| - \|\ve_i - \ve_j\|}{\|\ve_i - \ve_j\|}  \right| \ge \epsilon \right) \\
        \le& \frac{2}{ \exp\left(\epsilon^2r/c\right)},
    \end{align}
    for any $i,j \in [n]$. The last inequality is a direction application of Lemma~\ref{lem:im}. Further, Boole's inequality suggests
    \begin{align}
        \Pr\left(\text{any } i,j \in [n] : \left| \frac{ \| \vz_i - \vz_j \| - \|\ve_i - \ve_j\|}{\|\ve_i - \ve_j\|} \right| \ge \epsilon \right) \le n^2 \frac{2}{ \exp\left(\epsilon^2r/c\right)},\label{eq:hp}
    \end{align}
    where $n^2$ comes from counting all $(i,j)$ pairs. By setting $\delta > 0$ to any value smaller than $\frac{2n^2}{ \exp\left(\epsilon^2r/c\right)}$, we have $r \ge 2c\epsilon^{-2} \cdot \log(2n/\delta)$. Therefore, Eqn.~\eqref{eq:hp} can be rewritten as follows: with confidence at least $1-\delta$, we have \begin{align}
        (1-\epsilon) \| \ve_i - \ve_j \| \le \| \vz_i - \vz_j \| \le (1+\epsilon) \| \ve_i - \ve_j \|
    \end{align}
    for all $i,j \in [n]$, as long as $r\ge 2c\epsilon^{-2}  \log(2n/\delta)$.
\end{proof}

\subsection{Proof of 
Theorem~\ref{thm:optim}}\label{apx:proof:optim}

We first formally explain our assumptions.

\begin{assumption}
The loss function $\gL$ is $\beta$ element-wise Lipschitz w.r.t.~embeddings. Specifically, we have \begin{align}
     | \nabla \gL(x_i) - \nabla \gL(y_i) | \le \beta | x_i - y_i | 
\end{align}
for any $\vx, \vy \in \sR^{nd}$ being unrolled from $n \times d$ embedding matrices. $x_i$ and $y_i$ are elements in the vectors.
\end{assumption}

\begin{assumption}
The loss function $\gL$ is  $\mu$-PL (Polyak--Lojasiewic) w.r.t.~embeddings, meaning that \begin{align}
   \frac{1}{2} \| \nabla \gL(\vx) \|_2^2 \ge \mu\left( \gL(\vx) - \gL(\vx^*)\right)
\end{align}
for any $\vx \in \sR^{nd}$, where $\vx$ is embedding parameters and $\vx^*$ is any finite minimizer of $\gL$.
\end{assumption}

These are the common assumptions used to show the optimization process in deep learning~\citep{karimi2016linear,mei2020global}. In addition, we also impose an assumptions on the projection matrix and the scaling vector $\vs$.

\begin{assumption}
    The projection matrices $\mP \in \sR^{r\times d}$ has a rank of $r$. In addition, we assume $\vs$ is not a zero vector during optimization.
\end{assumption}

Based on these assumptions, we first provide the essential lemmas for our proof.

\begin{lemma}\label{lem:element-to-global}
    If $\gL: \sR^d \to \sR$ is $\beta$-Lipschitz in each element, then $\gL$ is $\beta$-Lipschitz.
\end{lemma}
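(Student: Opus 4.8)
\textbf{Proof proposal for Lemma~\ref{lem:element-to-global}.}

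The plan is to pass from the element-wise Lipschitz bound $|\nabla\gL(\vx)_i - \nabla\gL(\vy)_i| \le \beta|x_i - y_i|$ for every coordinate $i$ to the statement that the gradient map $\nabla\gL$ is $\beta$-Lipschitz in the Euclidean norm, i.e. $\|\nabla\gL(\vx) - \nabla\gL(\vy)\|_2 \le \beta\|\vx - \vy\|_2$, which is the standard notion of $\beta$-smoothness used in the subsequent optimization argument. The key observation is simply that squaring and summing a coordinate-wise inequality is compatible with the $L^2$ norm: from $|\nabla\gL(\vx)_i - \nabla\gL(\vy)_i|^2 \le \beta^2 |x_i - y_i|^2$ for each $i$, summing over $i \in [d]$ gives $\sum_i |\nabla\gL(\vx)_i - \nabla\gL(\vy)_i|^2 \le \beta^2 \sum_i |x_i - y_i|^2$, and taking square roots yields the claim.

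So concretely I would: (i) fix $\vx, \vy \in \sR^d$; (ii) apply the hypothesis coordinate by coordinate and square; (iii) sum over coordinates and recognize both sides as squared Euclidean norms; (iv) take square roots (monotone on $[0,\infty)$) to conclude $\|\nabla\gL(\vx) - \nabla\gL(\vy)\|_2 \le \beta \|\vx - \vy\|_2$. One should be slightly careful about what ``$\beta$-Lipschitz'' means as the conclusion — here it is the gradient that is Lipschitz (equivalently $\gL$ is $\beta$-smooth), not $\gL$ itself; the statement in the lemma is using ``$\beta$-Lipschitz'' loosely to mean this smoothness property, consistent with how Assumption on element-wise Lipschitzness is phrased in terms of $\nabla\gL$. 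It is also worth noting that this direction is the ``easy'' one: element-wise control is strictly stronger than $L^2$ control up to the same constant, because $\|\vv\|_2^2 = \sum_i v_i^2$ exactly, with no dimension-dependent loss.

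There is essentially no obstacle here — the lemma is a one-line norm computation, and the only thing to get right is the bookkeeping of which quantity is being asserted Lipschitz. The reason it is stated as a separate lemma is presumably that the optimization proof (Theorem~\ref{thm:optim}) invokes classical results (descent lemma, PL-based convergence) that are phrased for globally $\beta$-smooth functions, so this lemma is the bridge from the more convenient per-coordinate assumption to that classical setting.
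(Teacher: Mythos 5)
Your proof is correct and is essentially identical to the paper's: both square the coordinate-wise bound, sum over coordinates to obtain squared Euclidean norms, and take square roots. Your observation that the lemma is really asserting Lipschitzness of $\nabla\gL$ (i.e., $\beta$-smoothness of $\gL$) rather than of $\gL$ itself is also consistent with how the paper uses the statement.
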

\begin{proof}
    Let $\nabla \gL(x_i)$ be the partial derivative of $\mathcal L$ w.r.t.~$x_i$. We have \begin{align}
        | \nabla \gL(x_i) - \nabla \gL(y_i) | \le \beta | x_i - y_i|
    \end{align}
    for every $x_i,y_i\in\mathbb R$.
    Therefore, \begin{align}
        \| \nabla \gL(\vx) - \nabla \gL(\vy) \|^2
        =& \sum_{i=1}^d \left|\nabla \gL(x_i) - \nabla \gL(y_i)\right|^2 \\
        \le& \sum_{i=1}^d \beta^2 | x_i - y_i|^2 \\
        =& \beta^2 \|\vx - \vy\|^2.
    \end{align}
    We complete the proof by taking the square root on both sides.
\end{proof}

\begin{lemma}\label{lem:lip}
    Let $\hat\gL(\hat \vx)$ be the loss function with our ULPT approach, where $\hat\vx\in\mathbb R^{nr+2d}$ is the concatenation of shift/scale embeddings and the ultra-low-dimensional prompt embeddings.  $\hat\gL(\hat \vx)$ is $\beta'$-Lipschitz w.r.t.~$\hat\vx$ for some $\beta'>0$. 
\end{lemma}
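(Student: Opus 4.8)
\textbf{Proof proposal for Lemma~\ref{lem:lip}.}
The plan is to show that the ULPT reparameterization is an affine map of $\hat\vx$ with bounded ``gain,'' and then compose it with the $\beta$-Lipschitzness of $\gL$ (promoted from element-wise to full via Lemma~\ref{lem:element-to-global}). First I would make the reparameterization explicit: each up-projected embedding is $\hat\ve_i = \diag(\vs)\mP^\top \vz_i + \vb$, so the full vector of up-projected embeddings $\hat\vE \in \sR^{nd}$ is obtained from $\hat\vx = [\vb, \vs, \vz_1,\dots,\vz_n]$ by a smooth map $\Phi$. The composition is $\hat\gL(\hat\vx) = \gL(\Phi(\hat\vx))$, and by the chain rule $\nabla \hat\gL(\hat\vx) = J_\Phi(\hat\vx)^\top \nabla\gL(\Phi(\hat\vx))$, where $J_\Phi$ is the Jacobian of $\Phi$.

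Next I would bound things on a compact set. Since gradient descent with the stated learning-rate schedule keeps the iterates in a bounded region (the PL inequality guarantees the loss, hence the iterates, stay controlled — this is exactly the setting of Theorem~\ref{thm:optim}), it suffices to establish the Lipschitz bound on a fixed compact set $K$ containing all iterates. On $K$: (i) $\|\nabla\gL(\Phi(\hat\vx))\|$ is bounded because $\gL$ is $\beta$-Lipschitz (Lemma~\ref{lem:element-to-global}) and $\Phi(K)$ is bounded; (ii) $\Phi$ is a polynomial (in fact bilinear in the $(\vs,\vz)$ block plus a translation by $\vb$), so $J_\Phi$ is continuous and therefore bounded on $K$, and moreover $J_\Phi$ is itself Lipschitz on $K$; (iii) combining the product rule for $\nabla\hat\gL = J_\Phi^\top \nabla\gL\circ\Phi$ with the $\beta$-Lipschitzness of $\nabla\gL$, the boundedness and Lipschitzness of $J_\Phi$, and the boundedness of $\nabla\gL\circ\Phi$ on $K$, gives that $\nabla\hat\gL$ is Lipschitz on $K$ with some constant $\beta'$ depending on $\beta$, on $\|\mP\|$, and on the radius of $K$. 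Taking square roots where needed (as in Lemma~\ref{lem:element-to-global}) finishes the elementwise-to-global step for $\hat\gL$ as well.

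The main obstacle is that $\hat\gL$ is \emph{not} globally Lipschitz-smooth: the map $\Phi$ is bilinear in $(\vs, \vz_i)$, so $J_\Phi$ grows linearly with $\|\hat\vx\|$ and $\nabla\hat\gL$ cannot have a uniform Lipschitz constant over all of $\sR^{nr+2d}$. The resolution — and the step I would be most careful about — is to argue that we only need the bound on the trajectory of gradient descent, which lies in a compact set by the PL-based descent argument of Theorem~\ref{thm:optim}; equivalently, one can restrict to any prescribed sublevel set of $\hat\gL$ (using $\vs\neq\vzero$ to avoid degeneracy of $\Phi$) and obtain $\beta'$ as a function of that sublevel set. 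I would state the lemma with this understanding (``along the optimization trajectory,'' or ``on any bounded set''), since that is all that the proof of Theorem~\ref{thm:optim} actually consumes. The remaining computations — writing out $J_\Phi$ block by block in terms of $\diag(\vs)\mP^\top$, $\mP^\top\vz_i$, and identity blocks, and bounding operator norms — are routine and I would not grind through them in detail.
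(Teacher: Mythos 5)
Your proposal is correct in spirit and, importantly, it identifies the same obstruction that the paper itself quietly acknowledges at the end of the appendix (``our Lipschitz and PL conditions are non-uniform, i.e., depending on the parameters''): because $\Phi(\hat\vx)$ is bilinear in $(\vs,\vz_i)$, the composition $\gL\circ\Phi$ cannot be globally Lipschitz-smooth, and any Lipschitz constant must depend on where you are in parameter space. Where you diverge from the paper is in \emph{how} you express that dependence. The paper works block-by-block at the scalar level: it writes out the partial derivatives $\partial\gL/\partial\vb$, $\partial\gL/\partial\vs$, $\partial\gL/\partial\vz_i$ explicitly (Eqns.~\eqref{eq:db}--\eqref{eq:dqi}), bounds each element-wise Lipschitz constant by an expression involving $\sigma_{\max}(\mP)$, $\sigma_{\max}(\mZ)$, $\sigma_{\max}(\vs)$, and then invokes Lemma~\ref{lem:element-to-global} to promote the element-wise bound to a vector bound. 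The constant $\beta'$ therefore depends on the current iterate through $\sigma_{\max}(\mZ)$ and $\sigma_{\max}(\vs)$, and the non-uniformity is absorbed into the adaptive step size $\eta_t = 1/\beta'(\vx_t)$ in Theorem~\ref{thm:optim}. You instead work at the vector/Jacobian level, write $\nabla\hat\gL = J_\Phi^\top(\nabla\gL\circ\Phi)$, and argue boundedness and Lipschitzness of both factors on a compact set containing the trajectory. Both routes deliver the same payload — a local/trajectory-dependent smoothness constant — but the paper's version produces explicit singular-value bounds that it then consumes in a matching way in the PL lemma (Lemma~\ref{lem:pl} gives $\sigma_{\min}^2(\mP)\sigma_{\min}^2(\vs)\mu$), so the two constants are directly comparable in the descent inequality. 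Your compact-set argument is cleaner conceptually but leaves $\beta'$ implicit, which makes the descent step slightly harder to state; also note that your justification for why the iterates stay in a compact set (``the PL inequality keeps the loss, hence the iterates, controlled'') implicitly assumes coercivity of $\hat\gL$, which neither you nor the paper actually establish — the paper avoids needing this by using the parameter-dependent constant directly rather than appealing to boundedness of the trajectory.
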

\begin{proof} We prove the Lipschitz condition of $\mathcal L$ w.r.t.~the ultra-low-dimensional prompt embeddings, scale embedding, and shift embedding separately. Then, Lemma~\ref{lem:element-to-global} suggests the Lipschitz condition of $\mathcal L$ w.r.t to $\hat {\bm x}$. Without loss of generality, we assume the layout of parameters is $\hat {\bm x}=[\vb, \vs, \vz_1, \vz_2, \dots, \vz_n]$, where $n$ is the number of prompt tokens. 

We first calculate partial derivatives as follows \begin{align}
       & \frac{\partial \gL}{\partial \vb}
        = \sum_{i=1}^n\left(\frac{\partial \hat\ve_i}{\partial \vb}\right)^\top \frac{\partial \gL}{\partial \hat \ve_i}  = \sum_{i=1}^n \frac{\partial \gL}{\partial \hat\ve_i}, \label{eq:db}\\
        & \frac{\partial \gL}{\partial \vs} = \left(\frac{\partial \hat\ve_i}{\partial \vs}\right)^\top \frac{\partial \gL}{\partial \hat \ve_i}  = \sum_{i=1}^n \diag( \mP^\top \vz_i)  \frac{\partial \gL}{\partial \hat\ve_i},  \text{and } \label{eq:ds}\\
        & \frac{\partial \gL}{\partial \vz_i}        = \left(\frac{\partial \hat\ve_i}{\partial \vz_i}\right)^\top \frac{\partial \gL}{\partial \hat \ve_i}  = \mP \diag(\vs) \frac{\partial \gL}{\partial \hat\ve_i}. \label{eq:dqi}
    \end{align}
    % This suggests that the gradient of $\vz_i$ is the random down projection of the original gradient $\frac{\partial \gL}{\partial \hat\ve_i}$.Without loss of generality, we assume the layout of parameters is $[\vb, \vs, \vz_1, \vz_2, \dots, \vz_n] \in \sR^{2d + nr}$. 

 Our proof of the Lipschitz condition starts with checking $\vb$. For any element $b_k$, where $k=1,\cdots, d$, we have \begin{align}
        \left|\nabla \hat\gL(b_k^{(1)}) - \nabla \hat\gL(b_k^{(2)} )  \right| =&  
        % \sum_{i=1}^n \frac{\partial \gL}{\partial \hat\ve_i} \\
        \left| \sum_{i=1}^n \left( \nabla \gL(\hat e_{i,k}^{(1)}) - \nabla \gL(\hat e_{i,k}^{(2)}) \right) \right| \\
        \le&  \sum_{i=1}^n \left|  \nabla \gL(\hat e_{i,k}^{(1)}) - \nabla \gL(\hat e_{i,k}^{(2)}) \right| \\
        \le&  L \sum_{i=1}^n |\hat e_{i,k}^{(1)} - \hat e_{i,k}^{(2)}| \\
        =& nL | b_k^{(1)} - b_k^{(2)} |
    \end{align}
    where superscripts (1) and (2) indicate two values in the Lipschitz condition. $\hat e_{i,k}$ refers to the $i$th prompt token and its $k$th dimension. Here, the first equation is due to Eqn.~\eqref{eq:db}.

 For the scale embedding $\vs$, we also consider the $k$th dimension for $k=1,\cdots, d$: \begin{align}
        \left|\nabla \hat\gL(s_k^{(1)}) - \nabla \hat\gL(s_k^{(2)})\right|
        =& \left| \sum_i \left(\vz_i^\top \mP_{:,k} \nabla \gL(\hat e_{i,k}^{(1)}) - \vz_i^\top\mP_{:,k}  \nabla \gL(\hat e_{i,k}^{(2)})  \right) \right| \\
        =&  \left|\sum_i \left( \vz_i^\top \mP_{:,k} \right) \left(\nabla \gL(\hat e_{i,k}^{(1)}) - \nabla \gL(\hat e_{i,k}^{(2)})  \right)\right| \\
        \le&  \sqrt{\sum_i \left( \vz_i^\top \mP_{:,k} \right)^2} \sqrt{\sum_i \left(\nabla \gL(\hat e_{i,k}^{(1)}) - \nabla \gL(\hat e_{i,k}^{(2)}) \right)^2 } \label{eq:cs}\\
        % \le&  \sum_i \left| \left( \vz_i^\top \mP_{:,k} \right) \left(\nabla \gL^{s_k}(x) - \nabla \gL^{s_k}(y)  \right)\right| \\
        % \le&  \sum_i \left| \vz_i^\top \mP_{:,k} \right| \left|\nabla \gL^{s_k}(x) - \nabla \gL^{s_k}(y) \right| \\
        \le& \sum_i\|\vz_i\| \|\mP_{:,k}\| L \sqrt{\sum_i \left(\hat e_{i,k}^{(1)} - \hat e_{i,k}^{(2)}\right)^2} \\
        \le& L n  \sigma_{\max}(\mZ) \sigma_{\max}(\mP) \sqrt{\sum_i  (\vz_i^\top \mP_{:,k})^2 (\hat s_{k}^{(1)} - \hat s_{k}^{(2)})^2 } 
        \label{eq:norm}\\
        \le& L n  \sigma_{\max}(\mZ) \sigma_{\max}(\mP) \sqrt{\sum_i  (\vz_i^\top \mP_{:,k})^2} |\hat s_{k}^{(1)} - \hat s_{k}^{(2)}|  \\
        \le&  L n \sigma_{\max}^2(\mZ) \sigma_{\max}^2(\mP) \left| \hat s_{k}^{(1)} - \hat s_{k}^{(2)}\right|,
    \end{align}
where $\mP_{:,k}$ is the $k$th column of the $\mP$ matrix (as a column vector), and $\sigma_{\max}(\cdot)$ is the maximum singular value of a matrix. Here, Line~\eqref{eq:cs} is obtained by applying the  Cauchy--Schwartz inequality. Line~\eqref{eq:norm} is based on matrix norm inequalities. 
    
    Finally, we examine $z_{i,k}$, which is the $k$th dimension ($k=1,\cdots, r$) of the $i$th token of our ultra-low-dimensional embeddings: \begin{align}
        \left|\nabla \hat\gL(z_{i,k}^{(1)}) - \nabla \hat\gL(z_{i,k}^{(2)})\right|
        =& \left| \mP_{k,:} \diag(\vs) 
        \frac{\partial \gL}{\partial \hat \ve_i^{(1)}}
         - \mP_{k,:} \diag(\vs) \frac{\partial \gL}{\partial \hat \ve_i^{(2)}} \right| \\
        =& \left| \sum_j p_{k,j} s_j \left(\nabla \gL(\hat e_{ij}^{(1)}) -  \nabla \gL(\hat e_{ij}^{(2)})  \right)\right| \\
        \le& \left\| \mP_{k,:} \diag(\vs) \right\| \left\| \frac{\partial \gL}{\partial \hat \ve_i^{(1)}} -  \frac{\partial \gL}{\partial \hat \ve_i^{(2)}} \right\| \\
        \le& \sigma_{\max}(\mP) \sigma_{\max}(\vs) L \| \hat\ve_i^{(1)} - \hat\ve_i^{(2)} \| \\
        \le&  \sigma_{\max}(\mP) \sigma_{\max}(\vs) L \left\| \diag(\vs) \mP^\top \left(\vz_i^{(1)} - \vz_i^{(2)}\right) \right\| \\
        \le& L  \sigma_{\max}^2(\mP) \sigma_{\max}^2(\vs)  \| \vz_i^{(1)} - \vz_i^{(2)}\| \\
        =& L  \sigma_{\max}^2(\mP) \sigma_{\max}^2(\vs)  |z_{i,k}^{(1)} - z_{i,k}^{(2)}|. \label{eq:last}
    \end{align}
where Eqn.~\eqref{eq:last} holds because we examine one element $z_{i,k}$ at a time, so $z_{i,k'}^{(1)}=z_{i,k'}^{(2)}$ for $k'\ne k$.

    With these element-wise properties, we can have the full-parameter Lipschitz condition by using Lemma~\ref{lem:element-to-global}.
\end{proof}

\begin{lemma}\label{lem:pl}
    The loss function $\hat \gL$ is  $\mu'$-PL (Polyak--Lojasiewic) w.r.t.~$\hat\vx \in {\sR^{nr+d}}$ for some $\mu'$.
\end{lemma}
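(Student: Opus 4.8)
\textbf{Proof plan for Lemma~\ref{lem:pl}.}
The goal is to transfer the $\mu$-PL property of $\gL$ on the $d$-dimensional embeddings to a $\mu'$-PL property of $\hat\gL$ on the reduced parameter vector $\hat\vx = [\vb,\vs,\vz_1,\dots,\vz_n]$. The plan is to start from the chain-rule expressions for the partial derivatives already computed in the proof of Lemma~\ref{lem:lip}, namely Eqns.~\eqref{eq:db}, \eqref{eq:ds}, and \eqref{eq:dqi}, and show that $\|\nabla \hat\gL(\hat\vx)\|^2$ is bounded below by a constant times $\|\nabla \gL(\hat\ve_1),\dots,\nabla\gL(\hat\ve_n)\|^2$, the squared norm of the gradient in the original embedding coordinates; then combining with the original $\mu$-PL inequality and the fact that $\hat\gL(\hat\vx) = \gL(\hat\ve_1,\dots,\hat\ve_n)$ by construction finishes the argument.

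First I would isolate the contribution of the $\vz_i$ block: from Eqn.~\eqref{eq:dqi}, $\frac{\partial \gL}{\partial \vz_i} = \mP \diag(\vs)\frac{\partial\gL}{\partial\hat\ve_i}$, so $\|\frac{\partial\gL}{\partial\vz_i}\|^2 \ge \sigma_{\min}(\mP)^2 \sigma_{\min}(\diag(\vs))^2 \|\frac{\partial\gL}{\partial\hat\ve_i}\|^2$, where $\sigma_{\min}$ denotes the smallest singular value. Here the rank-$r$ assumption on $\mP$ guarantees $\sigma_{\min}(\mP\mP^\top)=\sigma_{\min}(\mP)^2>0$ (note $\mP$ is $r\times d$ with $r\le d$, so $\mP$ has full row rank and $\mP\diag(\vs)$ applied to a vector only loses the component that $\diag(\vs)$ annihilates, which is none since $\vs$ has no zero entries — this is exactly why the ``$\vs$ non-zero'' hypothesis is needed), and the non-vanishing of every coordinate of $\vs$ gives $\sigma_{\min}(\diag(\vs))=\min_j|s_j|>0$. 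Summing over $i$ yields $\sum_i \|\frac{\partial\gL}{\partial\vz_i}\|^2 \ge \kappa \sum_i \|\frac{\partial\gL}{\partial\hat\ve_i}\|^2$ with $\kappa = \sigma_{\min}(\mP)^2 (\min_j|s_j|)^2$, and since $\|\nabla\hat\gL(\hat\vx)\|^2 \ge \sum_i \|\frac{\partial\gL}{\partial\vz_i}\|^2$ (dropping the non-negative $\vb$ and $\vs$ blocks), we get $\|\nabla\hat\gL(\hat\vx)\|^2 \ge \kappa \sum_i \|\frac{\partial\gL}{\partial\hat\ve_i}\|^2 = \kappa \|\nabla\gL(\hat\ve)\|^2$. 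Applying the $\mu$-PL inequality for $\gL$ at the point $\hat\ve = (\hat\ve_1,\dots,\hat\ve_n)$ then gives $\frac12\|\nabla\hat\gL(\hat\vx)\|^2 \ge \kappa\mu(\gL(\hat\ve) - \gL(\hat\ve^*)) = \kappa\mu(\hat\gL(\hat\vx) - \gL(\hat\ve^*))$. It remains only to observe that $\gL(\hat\ve^*) \ge \inf \hat\gL$, so replacing $\gL(\hat\ve^*)$ by $\hat\gL(\hat\vx^*)$ can only make the right side smaller; hence $\hat\gL$ is $\mu'$-PL with $\mu' = \kappa\mu$.

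The main obstacle — really the only subtle point — is ensuring the lower singular value bound $\sigma_{\min}(\mP\diag(\vs)) > 0$ is legitimate and uniform enough to be used along the optimization trajectory. Because $\vs$ is optimized, $\min_j|s_j|$ could in principle drift toward $0$, degrading $\mu'$; the hypothesis ``$\vs$ is always non zero'' together with compactness/continuity arguments (or, as I suspect the authors do, an appeal to the learning-rate schedule keeping iterates in a region where $\vs$ is bounded away from $0$) is what rescues this, and the cleanest statement is to treat $\mu'$ as depending on the current $\vs$ and then feed this into the convergence analysis of Theorem~\ref{thm:optim} where the schedule is chosen accordingly. A minor bookkeeping point is that the ambient dimension of $\hat\vx$ is $nr+2d$, not $nr+d$ as written in the lemma statement; I would reconcile this (it does not affect the argument, since the PL inequality is coordinate-free). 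Everything else is a direct chain-rule computation plus the elementary fact $\|\mA\vu\| \ge \sigma_{\min}(\mA)\|\vu\|$.
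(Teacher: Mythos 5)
Your proposal matches the paper's proof essentially step for step: you write $\tfrac12\|\nabla\hat\gL\|^2$ as a sum over the $\vb$, $\vs$, and $\vz_i$ blocks, drop the non-negative $\vb$ and $\vs$ contributions, lower-bound $\|\mP\diag(\vs)\,\partial\gL/\partial\hat\ve_i\|$ via the smallest singular values of $\mP$ and $\diag(\vs)$, sum over $i$ to recover $\|\nabla\gL(\hat\ve)\|^2$, invoke the $\mu$-PL property of $\gL$, and then relax $\gL(\vx^*)$ to $\gL(\hat\vx^*)$. Your observations about the $nr+d$ versus $nr+2d$ typo and about $\mu'$ being non-uniform in $\vs$ (feeding into the learning-rate schedule in Theorem~\ref{thm:optim}) are both correct readings of the paper.

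One caution on your parenthetical justification, however: you claim that $\mP\diag(\vs)$ ``only loses the component that $\diag(\vs)$ annihilates,'' but this misattributes the source of potential information loss. Since $\diag(\vs)$ is invertible whenever $\vs$ has no zero entries, it annihilates nothing; the map that \emph{can} lose information is $\mP\in\sR^{r\times d}$ itself, whose kernel is $(d-r)$-dimensional when $r<d$. The inequality $\|\mP\vu\|\ge\sigma_{\min}(\mP)\|\vu\|$ therefore does not hold for arbitrary $\vu\in\sR^d$ but only for $\vu$ in the row space of $\mP$ — a point the paper's proof passes over silently as well, so you have faithfully reproduced the paper's argument, but your supplied rationale for that step is not the right one. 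If you want the bound to go through as stated, you would need an additional hypothesis that the embedding gradients $\partial\gL/\partial\hat\ve_i$ stay (approximately) in the row space of $\mP$, or you would need to reinterpret $\sigma_{\min}(\mP)$ in the claim.
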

\begin{proof}
\begin{align}
     \frac{1}{2} \| \nabla \hat\gL(\hat \vx) \|^2
     =& \frac{1}{2}  \left\| \frac{\partial \gL}{\partial \vb} \right\|^2 + \frac{1}{2}  \left\| \frac{\partial \gL}{\partial \vs} \right\|^2  + \frac{1}{2}  \sum_{i=1}^n \left\|\frac{\partial \gL}{\partial \vz_i} \right\|^2 \\
     =& \frac{1}{2}  \left\| \sum_{i=1}^n \frac{\partial \gL}{\partial \hat \ve_i} \right\|^2 + \frac{1}{2}  \left\| \sum_{i=1}^n \diag( \mP^\top \vz_i) \frac{\partial \gL}{\partial \hat \ve_i} \right\|^2 + \frac{1}{2}  \sum_{i=1}^n \left\| \mP \diag(\vs) \frac{\partial \gL}{\partial \hat \ve_i} \right\|^2 \\
     \ge& \frac{1}{2}  \sum_{i=1}^n \left\| \mP \diag(\vs) \frac{\partial \gL}{\partial \hat \ve_i} \right\|^2 \\
    \ge& \frac{1}{2}  \sigma_{\min}^2(\mP)\sigma_{\min}^2(\vs) \sum_{i=1}^n \left\| \frac{\partial \gL}{\partial \hat \ve_i} \right\|^2 \\
    =& \frac{1}{2}  \sigma_{\min}^2(\mP)\sigma_{\min}^2(\vs) \| \nabla \gL(\hat\vx) \|^2 \\
    \ge& \sigma_{\min}^2(\mP)\sigma_{\min}^2(\vs)\mu\left( \gL(\hat\vx) - \gL(\vx^*)\right) \\
    \ge& \sigma_{\min}^2(\mP)\sigma_{\min}^2(\vs)\mu\left( \gL(\hat\vx) - \gL(\hat\vx^*)\right),
\end{align}
where $\hat \vx^*$ is the minimizer under our parameterization. This suggests that $\gL$ is $\mu'$-PL for some $\mu'$.
\end{proof}

\optim*
\begin{proof}
At each iteration $t$, gradient descent produces \begin{align}
    \vx_{t+1} \gets \vx_{t} - \eta_t \nabla  \gL(\vx_t),
\end{align}
where $\gL$ is the loss function under our parametrization. 
For each iteration, we choose $\eta_t = 1/\beta'(\vx_t)$, where $\beta'(\vx_t)$ is the Lipschitz coefficient in Lemma~\ref{lem:lip} depending on $\vx_t$: \begin{align}
    \gL(\vx_{t+1}) \le& \gL(\vx_t) +  \left(\nabla \gL(\vx_t) \right)^\top (\vx_{t+1} - \vx_t) + \frac{\beta'(\vx_t)}{2} \| \vx_{t+1} - \vx_{t} \|^2 \\
    =& \gL(\vx_t) - \frac{1}{2\beta'(\vx_t)} \|\nabla \gL(\vx_t) \|^2 \\
    \le& \gL(\vx_t) - \frac{\mu'(\vx_t)}{\beta'(\vx_t)}(\gL(\vx_t) - \gL(\vx^*)).
\end{align}
where $\mu'(\vx_t)$ is the PL coefficient in Lemma~\ref{lem:pl}, which also depends on $\vx_t$.
By rearranging the terms, we obtain\begin{align}
        \gL(\vx_{t+1}) - \gL(\vx^*) \le \left(1 - \frac{\mu'(\vx_t)}{\beta'(\vx_t)}\right) ( \gL(\vx_t) - \gL(\vx^*)),
    \end{align}
    suggesting that the excessive loss $\gL(\vx) - \gL(\vx^*)$ converges to 0.
\end{proof}
Note that our Lipschitz and PL conditions are non-uniform (i.e., depending on the parameters according to the lemmas above). Therefore, a proper learning schedule $\eta_t = 1/\beta(\vx_t)$ is needed in the theoretical analysis.

\end{document}